\documentclass[review,onefignum,onetabnum]{siamart251216}
\usepackage[utf8]{inputenc}
\usepackage{amsmath}
\usepackage{amssymb}
\usepackage{float}
\usepackage{multirow}
\usepackage{tikz}
\usepackage{csquotes}
\usepackage{empheq}
\usepackage{pgfplots}
\pgfplotsset{compat=1.18}
\usepackage{subcaption}
\usepackage{xcolor}
\usepackage{booktabs}
\usepackage{arydshln}
\usepackage{rotating} 
\usepackage{enumitem}

 \newtheorem{thm}{Theorem}[section]
 
 \newtheorem{lem}[thm]{Lemma}{\rm}

 {\rm}
 
 \newtheorem{rem}[thm]{Remark}
 
\numberwithin{equation}{section}

\def\x{\mathbf{x}}
\def\y{\mathbf{y}}
\def\z{\mathbf{z}}
\def\q{\mathbf{q}}
\def\vecalph{\mathbf{\alpha}}

\def\R{\mathbb{R}}
\def\N{\mathbb{N}}
\def\M{\mathbf{M}}

\def\B{\mathbf{B}}
\def\om{\mathbf{\Omega}}

\newcommand{\mysum}[2]{\underset{#1}{\overset{#2}{\sum}}}

\title{
Scalable anomaly detection via a univariate Christoffel function}

\author{ Florian Grivet\textsuperscript{1,2}
\and Didier Henrion\textsuperscript{2,3}
\and Jean-Bernard Lasserre\textsuperscript{2}
\and Louise Trav\'e-Massuy\`es\textsuperscript{2}}

\footnotetext[1]{Centre national d'\'etudes spatiales (CNES), Paris, France. \tt florian.grivet@cnes.fr}

\footnotetext[2]{LAAS-CNRS, University of Toulouse, CNRS, Toulouse, France.}

\footnotetext[3]{Faculty of Electrical Engineering, Czech Technical University in Prague, Czechia.}

\nolinenumbers

\begin{document}
\maketitle 

\begin{abstract}

Anomaly detection plays a critical role in identifying unusual patterns across domains such as fraud detection, network intrusion, and system fault diagnosis. Recently, Christoffel function-based methods, rooted in polynomial optimization, have emerged as promising alternatives to deep learning due to their strong mathematical foundations and computational frugality.
However, their practical applicability is hindered by the need to invert a matrix whose size grows exponentially with the data dimension, rendering the method intractable even for moderate-dimensional datasets. 
This paper addresses the dimensionality limitations of Christoffel function-based anomaly detection while preserving its key theoretical properties, i.e., the on-off support dichotomy behavior and the accurate support shape capture.
We introduce UCF, a univariate Christoffel function
which is based on the squared distance between the query point and the support points.
Extensive experiments on the ADBench benchmark demonstrate that UCF consistently outperforms 14 state-of-the-art baselines in terms of Average Precision. 
By resolving the scalability bottleneck of the Christoffel Function, this work expands the toolkit of anomaly detection methods with a robust, theoretically grounded, and universally applicable approach.

\end{abstract}

\begin{keywords}
    Anomaly Detection, Outlier Detection, Christoffel Function,
    Data mining, Polynomial Optimization.
\end{keywords}

\begin{AMS}
    68T05, 62-07, 62G07, 33C50
\end{AMS}


\section{Introduction}
Anomaly detection (also known as outlier detection) stands out as a crucial problem of data science and particularly in statistical inference, especially in view of new challenges with high-dimensional data as outlined in e.g. \cite{high-dim-stat}. It involves identifying patterns in data that do not conform to expected behavior. These non-conforming patterns, or \emph{anomalies}, are often indicative of critical and actionable information in various domains, such as fraud detection, network security, and fault detection in industrial systems. The importance of anomaly detection lies in its ability to preemptively identify issues that could lead to significant consequences if left unaddressed. In several contexts, highly desirable features of a detection method are frugality and explainability. Such features exclude (black-box) deep learning methods which require substantial tuning and computational resources. 

Therefore candidate methods fall under the category of shallow learning\footnote{As opposed to deep learning}, which can be categorized into two main categories: distance-based and density-based, and further divided into several types: proximity-based (like KNN~\cite{knn}), clustering-based (like CBLOF~\cite{cblof}), statistical-based (KDE~\cite{kde}, HBOS~\cite{hbos}, ECOD~\cite{ecod}, GMM~\cite{gmm}), projection-based (OCSVM~\cite{ocsvm}, LODA~\cite{loda}, PCA~\cite{pca}), and tree-based (like IForest~\cite{iforest})
methods. 
Their principles vary. For instance, clustering-based methods group data points into clusters based on similarity, with anomalies being points that do not belong to any cluster or belong to sparse clusters. KNN-based methods identify anomalies as points that are far from their nearest neighbors, indicating low density regions. 
Statistical methods model the data distribution and identify anomalies as points with low probability under the estimated distribution.

\newpage
Let us also mention the work \cite{Chazal} where the authors formalize a notion of \emph{distance function to a measure} as an improvement for geometric inference to classical distance functions to a compact set (especially in the presence of outliers).
In the discrete empirical setting with a cloud of $N$ data points, the resulting distance evaluated at $\x$, is the average distance between $\x$ and its $k$-nearest neighbors.

Only relatively recently, it has been advocated that the 
Christoffel-Darboux kernel and the Christoffel function (CF), classical tools from orthogonal polynomials and approximation theory, can be very useful in data analysis and mining \cite{sos,Lasserre2019,Lasserreetal2022} and could provide an additional tool in the arsenal of methods alluded to above. CF-based methods fall under the category of statistical methods. The associated anomaly score 
is obtained via evaluation of an explicit polynomial 
which depends on the statistical moments of the empirical measure defined by the set of samples at hand, hence capturing the statistics of the dataset. Among the other methods mentioned above, KDE methods are the most closely related. However, CF introduces a distinct perspective compared to KDE, as it leverages properties of orthogonal polynomials and moments to provide a robust and theoretically grounded approach not only to anomaly detection, but also to support inference and density approximation.

In particular, Ducharlet et al.~\cite{Ducharlet2025} have exploited and developed the use of CF for anomaly detection. Their work demonstrates the value of this approach in terms of performance, frugality (only one parameter, the CF-degree, to tune with the DyCF-method and none with the DyCG-method), and explainability.

Although the performance of CF-based methods is well-established, this multidimensional approach unfortunately remains limited to problems of low dimension as it requires to invert a matrix of size ${n+d \choose d}$ where $d$ is the data dimension and $n$ the degree of the CF. This limitation has been partially addressed by Askari et al.~\cite{askari2018} who proposed a kernel-based CF approach interpreted as a \emph{ridge regression} problem. As its interesting and crucial feature, the bottleneck is now
identified by the size $N$ of the sample and not its dimension $d$, as one has to invert a resulting matrix of size $N+1$, as opposed to ${n+d\choose d}$ for the standard degree-$n$ CF. In addition, the degree $n$ only affects the entries of the matrix, not its size and
in the empirical version, features other than polynomials are also allowed. Essentially the associated score function (which depends on a regularization parameter to tune) evaluates how ``far" is the new data $\x$ to the vector subspace of the features space generated by the data. However the size $N$ of the sample is still a serious penalization for the approach 
as an $O(N^3)$ matrix inversion is required. In a different direction, Billet et al.~\cite{billet2026} have also proposed an autoencoder architecture whose training loss includes the CF score, reducing the input's high dimension to a reasonable size in the latent space, which can then be reused as input for a CF-based anomaly detection method. Finally, for the class
of measures with conditional product structure described via a graphical-model (which represents interactions between variables), the authors in \cite{Lass-Slot} introduce a frugal Christoffel-like function whose crucial complexity parameter is now the size of the largest clique in the graph. 

\subsection*{Contribution} 
In this paper, we propose a new CF-based method that is conceptually aligned in spirit, yet fundamentally different at the technical level from \cite{Ducharlet2025}.

\paragraph{A simpler approach to high-dimensional anomaly detection}

To address the challenging problem of outlier detection in high-dimensional (and possibly large-scale) data, we propose a new scoring method. To evaluate a target point, instead of computing \emph{once and for all} a complex, high-dimensional multivariate Christoffel Function (CF), one introduces UCF, a univariate Christoffel function that \emph{depends} on the point $\x$ to evaluate. UCF considers the squared distance between our target point $\x$ and all the other data points. 
For each point $\x\in\R^d$, one introduces a mapping of the complex dataset onto a simple positive number, the average squared distance between our target point $\x$ and all points of the dataset. This in turn yields a \emph{one-dimensional} \emph{pushforward measure} that we call $\nu_\x$. Then our new anomaly score UCF for $\x$ is simply the 
Christoffel function associated with $\nu_\x$, and evaluated at zero.
It is important to note that we are \emph{not} just crushing high-dimensional data into one dimension, which could result in a loss 
of key information. Because the univariate measure $\nu_\x$ depends on $\x$, 
the UCF score which is recalculated at each new query point $\x$, is based on the average distance from $\x$
to the points of the dataset, and therefore the spatial relationships are preserved. 

\paragraph{A much lighter computational load}

The relevance of this new score is how little computing power it requires. To calculate a degree-$n$ score, we only need two steps: 

$\bullet$ Build a small matrix: one constructs a Hankel matrix of size $n+1$, the degree-$n$ moment matrix associated with the univariate measure $\nu_\x$. 
Its entries are calculated simply by averaging powers of the squared distances between the target point $\x$ and the rest of the dataset.

$\bullet$ Matrix inversion: one then inverts this small moment matrix and
the top-left value of the inverse provides the reciprocal of the score.

Because $n$ is usually a very small number (typically between 6 and 8), computing the inverse is fast and straightforward. This is a significant improvement over the traditional multivariate method, which requires inverting a large matrix of size $O(n^d)$, a task that is computationally heavy and prone to numerical errors, even in modest dimension $d$. While we do have to rebuild our small matrix every time we check a new point $\x$, the math is so simple that it remains highly efficient, unless the training dataset is extremely large.

\paragraph{Frugal and easy to tune}

To actually flag outliers, we plug our new UCF score into existing anomaly detection frameworks (e.g. such as the one of \cite{Ducharlet2025}), swapping out the heavy multivariate CF for our lightweight UCF. Just like the original method in small dimension, our UCF approach 
is highly frugal, i.e. it runs efficiently and requires tweaking only a single hyperparameter. 

\paragraph{Theoretical bonus: direct density approximation}

Beyond simply finding anomalies in data mining, the UCF approach has a distinguishing elegant theoretical feature which is interesting in its own right.
Normally, in order to approximate the true underlying probability density function using the traditional multivariate CF, one has to know the equilibrium measure (from pluripotential theory \cite{Baran,bedford,klimek}) of its support, which is rarely known (except for some specific geometries). 
However, with the UCF, the approach greatly simplifies. If the density function is continuous, it can actually be approximated directly just by looking at the limit of the UCF as the degree $n$ grows. Because one bypasses the unknown (multivariate) equilibrium measure, it provides us with a much cleaner, and more explicit relationship to the 
underlying density.

\color{black}

\paragraph{Theoretical advantages confirmed by benchmarking}

Extensive benchmarking was conducted to evaluate the UCF approach against recent baseline algorithms in the same category, using the 47 ADBench datasets —which features significant variations in sample size and dimensionality \cite{adbench}. The results confirm the advantages and interest of the UCF approach.

\subsection*{Structure}
This article is organized as follows. Section \ref{sec:background} introduces the CF, emphasizing its key theoretical properties and its role in anomaly detection. Section \ref{sec:ucf} presents our novel UCF, detailing its formulation, theoretical properties, and its application to anomaly detection. We also establish its connection to the original measure density, illustrated through a theoretical example involving the arcsine measure. Section \ref{sec:experiments} describes the comprehensive experiments conducted to evaluate the performance of UCF. We first introduce the 47 datasets and 14 baselines used, followed by the experimental setup designed to ensure reproducibility, and conclude with the key results. Finally, section \ref{sec:conclusion} summarizes the main findings, discusses limitations, and outlines directions for future research.

\section{Background: The Christoffel Function}
\label{sec:background}

The Christoffel-Darboux kernel and the Christoffel Function (CF) are classical tools that originate from the theory of approximation and orthogonal polynomials. Only relatively recently it was advocated 
\cite{sos,Lasserre2019, Lasserreetal2022,Ducharlet2025} that 
basic theoretical properties of the CF could be leveraged in data mining and data analysis 
via the empirical version of the CF associated with a given cloud of data points. In this section one recall some key properties of both the theoretical CF, referred to as the population CF, and its empirical counterpart.

\subsection{The population Christoffel Function}
Let $\x = \begin{pmatrix} x_1, x_2, \cdots, x_d \end{pmatrix} \in \R^d$. To define polynomials, we adopt the multi-index notation $\vecalph = \left( \alpha_i \right)_{i=1...d} \in \N^d$, such that the monomial $\x^\vecalph$ of total degree $|\vecalph| = \sum_{i=1}^{d} \alpha_i$ is given by $\x^\vecalph = x_1^{\alpha_1} x_2^{\alpha_2} \cdots x_d^{\alpha_d}$. In short form, we denote the set of $d$-variate polynomials by $\R[\x]$. The dimension of $\R_n[\x]$, the space of $d$-variate polynomials of degree at most $n$, is given by $s_d(n) = \begin{pmatrix} d+n \\ n\end{pmatrix}$. 

Let $\{P_i : 1 \leq i \leq s_d(n) \}$ be a basis of $\R_n[\x]$ and introduce:
\begin{align*}
    v_n : \R^d & \longrightarrow \R^{s_d(n)} \\
    \x & \longmapsto \left(P_1(\x), P_2(\x), \cdots, P_{s_d(n)}(\x) \right)^T\,.
\end{align*}
The monomials in $v_n(\x)$ are graded in the lexicographic order\footnote{lexicographic order: monomials are first ordered according to ascending total degree $|\vecalph|$, and then using lexicographic order on variables considering $\x_1=a, \x_2=b$, etc.}. For a polynomial
$Q\in \R[\x]$, write 
\[\x\mapsto Q(\x)\,=\,\langle \q,v_n(\x)\rangle\,,\quad\forall\x\in\R^d\,,\]
where $\q\in\R^{s_d(n)}$ is the coefficient vector of $Q$ in the basis $v_n$.

Let $\om \subset \R^d$ be a compact set with non-empty interior and let $\mu$ be a non-negative Borel measure whose support is $\om$.

\begin{definition}[Moment matrix]
\label{def:moment_matrix}
The moment matrix of order $n \in \N$, associated with measure $\mu$, and denoted by $\M_n(\mu) \in \R^{s_d(n) \times s_d(n)}$, is defined as
\begin{equation}
    \label{eq:mm}
    \M_n(\mu) = \int_{\R^d} v_n(\x) ~ v_n(\x)^T d\mu(\x).
\end{equation}
\end{definition}

Note that this matrix is symmetric positive definite, thus non-singular for all $n$ (see \cite[Section 2.2]{sos} or \cite[Remark 2.3]{vu2020rateconvergencegeometricinference} for the proof).

\begin{definition}[The Christoffel-Darboux Kernel]
\label{def:cd_kernel}
The CD kernel associated with the measure $\mu$, denoted by $K^\mu_n(\x, \y)$, is defined by:
\begin{equation}
    \label{eq:cdk}
    (\x, \y) \mapsto K^\mu_n(\x, \y) = \sum_{j=1}^{s_d(n)}T_j(\x)\,T_j(\y) \,,\quad\forall \x,\y\in\R^d\,,
    \end{equation}
    where $(T_j)_{j\in\N}$ is an arbitrary family of polynomials, orthonormal with respect to $\mu$.
\end{definition}
Importantly,  the CD kernel does \emph{not} depend on the family $(T_j)_{j\in\N}$ of orthonormal polynomials, and it turns out that
\[K^\mu_n(\x,\y)\,=\,v_n(\x)^T\M_n(\mu)^{-1}v_n(\y)\,,\quad\forall \x,\y\in\R^d\,.\]
One also introduces the polynomial
\begin{equation}
    \label{eq:Q}
    \x \mapsto Q_{\mu, n}(\x) = K^\mu_n(\x, \x) = v_n(\x)^T \M_n(\mu)^{-1} v_n(\x).
\end{equation}
which is the diagonal of the CD-kernel, and 
by \eqref{eq:cdk} is a sum-of-squares (SOS) polynomial of degree $2n$.
\vspace{0.5em}

\begin{definition}[The population Christoffel Function]
\label{def:population_christoffel}
The population CF of degree $n \in \N$, associated with the measure $\mu$, denoted by $\Lambda^\mu_n(\x)$, is defined as
\begin{equation}
    \label{eq:cf_int_p}
    \Lambda^\mu_n(\x) = \underset{Q \in \R_n[\x]}{min}\left\{ \int_\om Q(\z)^2 ~ d\mu(\z), \quad Q(\x) = 1 \right\}\,,\quad\forall \x\in\R^d\,.
\end{equation}
\end{definition}

Equivalently:
\begin{equation}
    \label{eq:cf_mm}
    \Lambda^\mu_n(\x) = \underset{\q\in \R^{s_d(n)}}{min}\left\{ \q^T \M_n(\mu) ~ \q, \quad \q^T v_n(\x) = 1 \right\}\,,
\end{equation}
which is a well-defined convex quadratic program. It also turns out that
\begin{equation}
    \label{equiv-def}
\Lambda^\mu_n(\x)^{-1}\,=\,K^\mu_n(\x,\x)\,=\,Q_{\mu,n}(\x)\,,\quad\forall\x\in\R^d\,.
\end{equation}

An interesting property of $Q_{\mu,n}$ is the dichotomy of its behavior with $n$, inside and outside the support $\om$ of $\mu$. Indeed by \cite[Lemma 4.3.1, Lemma 4.3.2]{Lasserreetal2022}, for fixed $\x\in\R^d$, the growth of $Q_{\mu,n}(\x)$ with $n$, is  at least exponential whenever $\x\not\in\om$,
whereas it is at most polynomial whenever $\x\in\mathrm{int}(\om)$.
This property proved to be crucial in using the CF 
as a score function for detecting outliers in data mining applications as in e.g. 
\cite{askari2018,Ducharlet2025,sos,Lasserre2019} (where in  such a context, the measure $\mu$ is simply the empirical measure supported on the dataset).

Next we state an important asymptotic property of the 
CF $\Lambda^\mu_n$ associated with a \emph{univariate} probability measure $\mu$ on a compact  interval $[a,b]$ of the real line. This result 
\cite[Theorem 3.3.1]{Lasserreetal2022} due to
Mat\'e, Nevai and Totik \cite{mate-nevai-totik},  will be crucial for approximating the density of $\mu$ in the multivariate case, and is re-stated below for the case $[a,b]=[-1,1]$.

\begin{thm}(Mat\'e-Nevai-Totik)
\label{thm1}
 Let $\mu$ be a positive measure on $[-1,1]$   with density $u(t)=d\mu(t)/dt$. If $\mu$ belongs to the Szeg\"o class, that is,
 \[\int_{-1}^1 \frac{\log u(t)}{\sqrt{1-t^2}}dt\,>\,-\infty\,,\]
 then the CF $\Lambda^\mu_n$ satisfies
 \begin{equation}
     \label{thm1-1}
     \lim_{n\to\infty}n\,\Lambda^\mu_n(t)\,=\,\pi\, u(t)\,\sqrt{1-t^2}\,,\quad\mbox{a.e. on [-1,1]}\,.
 \end{equation}
\end{thm}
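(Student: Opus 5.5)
The plan is to prove the two inequalities
\[\limsup_{n\to\infty} n\,\Lambda^\mu_n(t)\le \pi u(t)\sqrt{1-t^2}\quad\text{and}\quad\liminf_{n\to\infty} n\,\Lambda^\mu_n(t)\ge \pi u(t)\sqrt{1-t^2}\]
separately, at almost every $t\in(-1,1)$. Both will be obtained by comparing $\mu$ with the arcsine (Chebyshev) measure, for which everything is explicit. Two elementary facts come first.

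\emph{Monotonicity.} If $\mu\le\nu$ then $\Lambda^\mu_n\le\Lambda^\nu_n$. This is immediate from the variational definition \eqref{eq:cf_int_p}, and the same argument gives the scaling $\Lambda^{c\mu}_n=c\Lambda^\mu_n$.

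\emph{The model case.} Take $d\omega(t)=dt/(\pi\sqrt{1-t^2})$. Its orthonormal polynomials are $T_0=1$ and $\sqrt2\,T_k$, where $T_k(\cos\theta)=\cos k\theta$. Hence
\[K^\omega_n(t,t)=1+2\sum_{k=1}^n\cos^2 k\theta = n+O(1)\]
locally uniformly on $(-1,1)$. So $n\Lambda^\omega_n(t)\to 1$, which is exactly $\pi u_\omega(t)\sqrt{1-t^2}$ with $u_\omega(t)=1/(\pi\sqrt{1-t^2})$. The theorem therefore says that $\Lambda_n^\mu$ behaves like $\Lambda_n^\omega$ rescaled by the local ratio of densities $\pi u(t)\sqrt{1-t^2}$.

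\textbf{Upper bound.} I would prove this at every Lebesgue point $t_0$ of $u$, without using the Szegő condition; this is a localization argument. Fix $\varepsilon>0$ and a small interval $I=[t_0-\delta,t_0+\delta]$. Let $P_n$ be the extremal polynomial for $\Lambda^\omega_{(1-\varepsilon)n}(t_0)$, normalized so that $P_n(t_0)=1$. Let $R_n$ be a ``fast decreasing'' polynomial of degree $\varepsilon n$ with $R_n(t_0)=1$, $|R_n|\le1$ on $[-1,1]$, and $|R_n|\le e^{-c\varepsilon n}$ outside $I$. Then $Q=P_nR_n$ is admissible in \eqref{eq:cf_int_p}, and
\[\int Q^2\,d\mu\le \int_I P_n^2\, u\,dt+e^{-2c\varepsilon n}\,\|P_n\|_\infty^2\,\mu([-1,1]).\]
The first term is bounded by $\big(\sup_I \pi u\sqrt{1-t^2}+o(1)\big)\int P_n^2\,d\omega$. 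Making this sharp at a Lebesgue point needs a little more care (the $\sup_I$ of $u$ is not controlled by Lebesgue-point averages); I would handle it by comparing with $\int_I P_n^2\,dt$ via the Markov-type bound $\|P_n\|_\infty^2\lesssim n\Lambda^\omega_n$. The second term is exponentially small because $\|P_n\|_\infty$ grows only polynomially. Letting $n\to\infty$, then $\delta\to0$, then $\varepsilon\to0$ gives the limsup inequality.

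\textbf{Lower bound.} This is the main obstacle, and it is where the Szegő condition enters. I would substitute $t=\cos\theta$, which transfers the problem to the unit circle with weight $w(\theta)=\pi u(\cos\theta)|\sin\theta|$; the Szegő condition is exactly $\int\log w>-\infty$. Algebraic polynomials of degree $n$ in $t$ become even trigonometric polynomials of degree $n$, which are of the form $z^{-n}p(z)$ with $p$ of degree $2n$. Under this hypothesis $w$ admits an outer Szegő function $D\in H^2$ with $|D|^2=w$ a.e. on the circle.

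The first thing I would try is the following. For admissible $Q$, the function $p\,D$ lies in $H^2$, and its value at an interior point near $e^{i\theta_0}$ is controlled by $\|pD\|_{L^2}$ through the Poisson/reproducing kernel estimate. Taking the point at distance of order $1/n$ from the circle, and using Bernstein-type control of $p$ between the circle and that point, this should give
\[\int Q^2\,d\mu\ \ge\ (1-o(1))\,\frac{|D(e^{i\theta_0})|^2}{n}\]
at every point where $D$ has a nontangential limit, i.e.\ a.e. Since $|D(e^{i\theta_0})|^2=w(\theta_0)=\pi u(t_0)\sqrt{1-t_0^2}$, this is the liminf inequality.

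If the $H^2$ argument turns out too lossy, the fallback is the Máté–Nevai–Totik route. One approximates $\log w$ from below by trigonometric polynomials $h_m$, uses $e^{h_m}=|g_m|^2$ with $g_m$ a polynomial-like outer function of fixed degree $m$, and compares with the weight $|g_m|^2$, for which the asymptotics follow from the model case. One then lets $m\to\infty$; here the integrability of $\log w$ guarantees that $\int |g_m|^2/w \to 1$ in the appropriate averaged sense.

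Combining the two bounds gives \eqref{thm1-1} almost everywhere.
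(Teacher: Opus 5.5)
The paper does not prove this statement. It quotes it from Mat\'e--Nevai--Totik through \cite{Lasserreetal2022}, so your proposal has to stand on its own. It has a genuine gap: the lower bound is not established by either of your two routes.

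\textbf{Upper bound (repairable).} Your upper-bound half is sound in outline; it is the easy half and needs no Szeg\H{o} condition. The repair you suggest for the $\sup_I u$ issue is not the right one, though. A uniform bound on $P_n$ cannot localize $\int_I P_n^2u\,dt$ at scale $1/n$: mass of $P_n^2$ at distance $r\gg1/n$ from $t_0$, where $u$ is large, is compatible with $t_0$ being a Lebesgue point. Use instead the explicit decay $|P_n(t)|\lesssim\min\{1,(n|t-t_0|)^{-1}\}$. This follows from $K^\omega_m(\cos\theta,\cos\theta_0)=\frac12\bigl(D_m(\theta-\theta_0)+D_m(\theta+\theta_0)\bigr)$, with $D_m$ the Dirichlet kernel. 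It gives $\int_{-1}^1P_n^2\,|u-u(t_0)|\,dt=o(1/n)$ at every Lebesgue point, and it even makes $R_n$ unnecessary.

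\textbf{Lower bound, primary route.} This half is the whole content of the theorem. With $w=\pi u(\cos\theta)|\sin\theta|$ one has $\int Q^2\,d\mu=\int_{-\pi}^{\pi}|p|^2w\,\frac{d\theta}{2\pi}$, so the target is $\|pD\|_2^2\ge(1-o(1))\,w(\theta_0)/n$.

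As described, your $H^2$ argument uses only that $pD\in H^2$, $\deg p\le 2n$ and $|p(e^{i\theta_0})|=1$. No argument using only these facts can reach that constant. For $w\equiv1$ (the arcsine case), $p(z)=\frac{1}{2n+1}\sum_{k=0}^{2n}(ze^{-i\theta_0})^k$ has all three properties, yet $\int|p|^2\frac{d\theta}{2\pi}=\frac1{2n+1}$.

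The lost factor $2$ comes from ignoring that $Q(\cos\theta)$ is even in $\theta$. The constraint therefore sits at both $e^{\pm i\theta_0}$, and the extremal carries half its mass near each point.

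On top of that, the point evaluation is itself lossy. Take the exact interval minimizer in the arcsine case and $z=(1-a/n)e^{i\theta_0}$. Then $(1-|z|^2)|p(z)|^2=\bigl((1-e^{-2a})^2/(2a)+o(1)\bigr)/n\le(0.41+o(1))/n$ for every $a>0$, while its true norm is $\sim1/n$.

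Bernstein-type inequalities cannot close this gap. You would need a \emph{lower} bound $|p(z)|\ge(1-o(1))|p(e^{i\theta_0})|$. Bernstein bounds are upper bounds in terms of a sup norm, and $\int|p|^2w$ does not control that sup norm.

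\textbf{Lower bound, fallback.} This does not supply the missing mechanism either. Monotonicity gives lower bounds on $\Lambda^\mu_n$ only from weights lying \emph{below} $u$. A Szeg\H{o} density need not have any continuous positive minorant near $t_0$.

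For example, take $\log u(t)=-\sum_k2^{-k}|t-q_k|^{-1/2}$ with $(q_k)$ dense in $[-\frac12,\frac12]$. This $u$ satisfies the Szeg\H{o} condition. Yet $\log u$, hence $\log w$, is essentially unbounded below on every subinterval there, so no trigonometric polynomial $h_m$ lies below $\log w$.

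For the same $u$, $1/w$ is not integrable near any $q_k$. Hence $\int e^{h_m}/w=+\infty$ for every continuous $h_m$. So the claim that integrability of $\log w$ forces $\int|g_m|^2/w\to1$ is false.

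When $1/w\in L^1$ the lower bound is indeed elementary: apply Cauchy--Schwarz to $1=\int Q\,K^\omega_n(\cdot,t_0)\,d\omega$. The substance of Mat\'e--Nevai--Totik is obtaining the sharp constant when only $\log w$ is integrable.

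As it stands, your proposal proves only $\limsup_n n\Lambda^\mu_n(t)\le\pi u(t)\sqrt{1-t^2}$ a.e. The $\liminf$ inequality, where the Szeg\H{o} hypothesis must enter in an essential global way, remains unproved.
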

The convergence is only pointwise almost everywhere on $[-1,1]$. To obtain stronger convergence 
properties (e.g. uniform convergence on compact subsets of $[-1,1]$) one has to assume additional regularity properties of the density.

\subsection{The empirical Christoffel Function}

~Let $\mathcal{X}$ be a cloud of $N$ data points $\x \in \R^d$ sampled from a  theoretical probability distribution $\mu$ with support $\om$. In practical applications of data mining, the underlying measure $\mu$ is unknown and so one introduces the discrete empirical measure 
$\mu_N := \frac{1}{N} \sum_{\x \in \mathcal{X}} \delta_{\x}$ supported on $\mathcal{X}$
(where $\delta_\x$ stands for the Dirac measure at $\x$).
The empirical version of the moment matrix reads:
\begin{equation}
    \M_n(\mu_N) = \frac{1}{N} \mysum{\x \in \mathcal{X}}{} v_n(\x) ~ v_n(\x)^T\,,\quad n\in\N\,.
    \label{eq:empirical_mm}
\end{equation}
If the size $N$ of $\mathcal{X}$ is greater than $s_d(n)$ and no polynomial of degree 
$n$ vanishes on $\mathcal{X}$, then $\M_n(\mu)$ is non singular \cite[Corollary 6.3.5]{Lasserreetal2022}.

\begin{definition}[Empirical Christoffel Function]
\label{def:empirical_cf}
    If $N:=|\mathcal{X}| \geq s_d(n)$ and no polynomial of degree $n$ vanishes on $\mathcal{X}$, the empirical CF is defined as
    \begin{equation}
        \label{eq:empirical_cf} 
        \Lambda_n^{\mu_N}(\x) = \frac{1}{v_n(\x)^T \M_n(\mu_N)^{-1} ~ v_n(\x)}.    
    \end{equation}
\end{definition}

According to \cite[Theorem 3.13]{Lasserre2019}, for every $n\in\N$, fixed,
the degree-$n$ empirical CF converges to the degree-$n$ population CF as $N$ increases:
\vspace{-0.2em}
\begin{equation*}
    \| \Lambda_n^{\mu_N} - \Lambda_n^{\mu} \|_\infty = \underset{\x \in \R^d}{\sup} \left \{|\Lambda_n^{\mu_N}(\x) - \Lambda_n^{\mu}(\x)| \right \} \underset{N \to \infty}{\longrightarrow} 0 \quad a.s.
\end{equation*}

\subsection{The Christoffel Function for outlier detection}
As already mentioned, a crucial feature of the degree-$n$ CF $\Lambda^\mu_n$ is the dichotomy of its behavior with $n$, depending on whether it is evaluated at a point inside or outside  the support $\Omega$.
As a result, sublevel sets of the polynomial $Q_{\mu, n}(\x)$
effectively capture the shape of the underlying dataset. 
As for fixed $n$, $\Lambda_n^{\mu_N}(\x)^{-1}$ converges to $\Lambda_n^{\mu}(\x)^{-1}$ as $N$ grows, these properties are preserved for finite datasets. Of course, when $n$ grows, 
the size $N$ of the sample needs to be adjusted accordingly as indicated in \cite{Lasserreetal2022}; however, in practice 
$n$ is kept relatively small (e.g. $n\leq 8,10$) and so a sufficiently large sample size $N$ 
is fine.

Consequently, $\Lambda^{\mu_N}_n(\x)$ is a well-suited \emph{scoring function} for outlier detection. Indeed one can define an appropriate level set with threshold $\gamma_{n, d}$ such that all points $\x \in \R^d$ verifying $\Lambda^{\mu_N}_n(\x)^{-1}>\gamma_{n,d}$ are considered as outliers. This defines the scoring function
\begin{equation}
    \label{eq:scoring_function}
    S_{n,d}(\x) = \frac{\Lambda^{\mu_N}_n(\x)^{-1}}{\gamma_{n,d}} = \frac{v_n(\x)^T \M_n(\mu_N)^{-1} ~ v_n(\x)}{\gamma_{n,d}}\,,\quad \x\in\R^d\,,
\end{equation}
and a point is detected as an outlier if $S_{n,d} \geq 1$.

While the CF provides a natural and efficient score function (even with moderate degree $n$) with 
strong links to the support of the underlying measure, its 
computation does not scale well. Indeed for fixed dimension $d$, 
the size of the moment matrix $\M_n(\mu)$ (or $\M_n(\mu^N)$) is ${d+n\choose d}$
(i.e. $O(n^d)$) which rapidly becomes an obstacle even for moderate dimension $d$ and degree $n$, and prevents from its use in many data mining applications. In the next section we describe how
to overcome this obstacle.

\section{A Univariate Christoffel Function approach}
\label{sec:ucf}
To address the high-dimensional data challenge, we propose a new score function to evaluate whether a point $\x$ is an outlier (or an anomaly).
This score function, namely the Univariate Christoffel Function (UCF), is 
the CF associated with the pushforward measure $\nu_\x$ of the empirical 
measure $\mu_N$ (supported on the initial sample) by the mapping $\y\mapsto \Vert\y-\x\Vert^2$, where $\x$
is the data point of interest. Hence $\nu_\x$ is a \emph{univariate}  probability  measure on the positive half line. 

At this stage it is important to realize that one does \emph{not} replace a high-dimensional problem with a one-dimensional one, which would be highly questionable as an unavoidable loss of information would occur. Indeed the univariate measure $\nu_\x$
\emph{depends} on $\x$ and hence changes with $\x$, whereas (the multivariate) $\Lambda^\mu_n$ is computed once and for all.
Thus the
univariate ``trick" is to be paralleled with the univariate \emph{needle polynomial} of Kro\'o and Lubinsky~\cite{kroo2013christoffel}
that was used to provide an upper bound on the (multivariate) CF $\Lambda^\mu_n(\x)$ 
at $\x$, by using a univariate polynomial of the variable $t:=\Vert\y-\x\Vert^2$. So the score function
is now $\Lambda^{\nu_\x}_n(0)$, i.e., the univariate CF associated with $\nu_\x$, evaluated at $z=0$. 

The computational burden for computing this new degree-$n$ score function $\Lambda^{\nu_\x}_n(0)$ is as follows:
\begin{itemize}
        \item Compute the (Hankel) moment matrix $\M_n(\nu_\x)$ of size $n+1$ via
        \begin{eqnarray}
                    \nonumber
                    \M_n(\nu_\x)[k,\ell]&:=&\int \Vert \y-\x\Vert^{2(k+\ell-2)}d\mu^N(\y)\,,\quad 1\leq k,\ell\leq n+1\\
        \label{formula}
        &=&\frac{1}{N}\sum_{i=1}^N(\Vert \y(i)-\x\Vert^{2})^{k+\ell-2}\,,\quad 1\leq k,\ell\leq n+1\,,
        \end{eqnarray}
        where $(\y(i))_{1\leq i\leq N}\subset\R^d$ is the sample of data points under investigation.
        \item Invert $\M_n(\nu_\x)$
        of size $n+1$ 
        to obtain the score
        \[1/\Lambda^{\nu_\x}_n(0)\,=\,e_0^T\M_n(\nu_\x)^{-1}e_0\,,\] where $e_0\in\R^{n+1}$ is the vector $(1,0,\ldots,0)$.
\end{itemize}
The crucial step of inverting $\M_n(\nu_\x)$ is 
quite straightforward, especially as one usually needs to do it for relatively modest degree $n$, say $n\leq 6, 8$. On the other hand, even in modest dimension $d$, inverting $\M_n(\mu_N)$ 
can be quite challenging even for relatively small $n$ (let alone numerical issues). As shown later in the paper, computing the entries of $\M_n(\nu_\x)$ in \eqref{formula} can be done quite efficiently, which is important as one needs to redo the computation when $\x$ changes.
\begin{rem}
  For fixed degree $n$, and in view of \eqref{formula}, the entry $(k,\ell)$ of 
  $\M_n(\nu_\x)$ is an explicit polynomial in $\x$ of degree $2(k+\ell-2)$. Therefore 
  $\Lambda^{\nu_\x}_n(0)$ is an explicit rational function of $\x$ which in principle could be computed once and for all (as the ratio $p_1(\x)/p_2(\x)$ of two determinants of $\M_n(\nu_\x)$). However this is practical only if $n$ is  small (e.g. with $n=2$, $\mathrm{deg}(p_i)=12$, $i=1,2$) and the dimension $d$ is modest to avoid storing a large vector of coefficients.
   \end{rem}

\subsection{The UCF score function}

Let $\om\subset\R^d$ be compact with nonempty interior and let $\om$ be the closure of its interior.  Let $d\mu(\x)=f(\x)d\x$ be a probability measure with 
support $\mathrm{supp}(\mu)=\om$, and with density $f>0$ on $\om$.
Given $\x\in\R^d$, fixed, arbitrary, let $g_\x:\R^d\to \R_+$
be the mapping $\y\mapsto g_\x(\y):=\Vert \y-\x\Vert^2$. 
Introduce the pushforward $\nu_\x:=g_\x\#\mu $ of $\mu$ by the mapping $g_\x$. It is the (univariate) measure on $\R_+$ defined by
\[\nu_\x(B)\,:=\,\mu(g_\x^{-1}(B))\,=\,\mu(\{\y \in \Omega : g_\x(\y) \in B\}),\,\quad\forall B\in\mathcal{B}(\R)\,.\]
Observe that 
\begin{equation} \label{supp_nu}
\mathrm{supp}(\nu_\x)\,\subset\,\left[\min_{y\in\om} g_\x(\y),\,\max_{y\in\om} g_\x(\y)\right]\,=:\,[a_\x,b_\x]\,\subset [0,\infty)\,,    
\end{equation}
and $a_\x=0$ whenever $\x\in\om$. More precisely, if $\om$ is connected then
$\mathrm{supp}(\nu_\x)=[a_\x,b_\x]$ because $g_\x$ is continuous. In addition,
\[\nu_\x(\{0\})\,:=\,g_\x^{-1}(\{0\})\,=\,\mu(\{\x\})\,=\,0\,,\]
as we have assumed that $\mu$ has no atom in its support. Next, 
observe that
\begin{eqnarray}
\nonumber
\Lambda_{2n}^\mu(\x)&=&\min_{p\in\R[\x]_{2n}}\,\{\,\int p^2\,d\mu:\:p(\x)\,=\,1\,\}\,,\quad\forall \x\in\R^d\\
\label{lien-1}
&\leq&\min_{q\in\R[t]_n}\,\{\,\int q(\Vert\y-\x\Vert^2)^2\,d\mu(\y):\:q(0)\,=\,1\,\}\,,\quad\forall \x\in\R^d\\
\label{line-2}
&=&\min_{q\in\R[t]_n}\,\{\,\int q(z)^2\,d\nu_\x(z):\:q(0)\,=\,1\,\}\,,\quad\forall \x\in\R^d\\
\label{lien-2}
&=&\Lambda^{\nu_\x}_n(0)\,,\quad\forall \x\in\R^d\,.
\end{eqnarray}

The inequality \eqref{lien-1} can be established as follows. Since \(g_{\x}\) has degree \(2\) in \(\y\), and \(q\) has degree at most \(n\), the composition \(p_q(\y):=q(\|\y-\x\|^2)\)
is a polynomial in \(\y\) of degree at most \(2n\). Moreover, \(
p_q(\x)=q(\|\x-\x\|^2)=q(0)\). Hence, if \(q(0)=1\), then
\(p_q(\x)=1\). Therefore every feasible polynomial \(q\) for the univariate constrained problem in \eqref{line-2} yields a feasible polynomial \(p_q\) for the multivariate constrained problem.

The equality \eqref{lien-2} follows from the definition of the push-forward measure. Let \(q \in \mathbb{R}[t]_n\) and \(h(t):=q(t)^2\). Then \(h\) is Borel measurable and bounded on the compact support of \(\nu_{\x}\), hence integrable.
By the defining property of the pushforward measure,
\(\int h(z)\,d\nu_{\x}(z)=\int h(g_{\x}(\y))\,d\mu(\y)\).
Substituting \(h(z)=q(z)^2\) and \(g_{\x}(\y)=\|\y-\x\|^2\), one obtains
\(\int q(z)^2\,d\nu_{\x}(z)=\int q(\|\y-\x\|^2)^2\,d\mu(\y)\).

So for every $\x\in\R^d$, the CF $\Lambda^\mu_{2n}(\x)$ 
is bounded above by 
$\Lambda^{\nu_\x}_n(0)$ in \eqref{lien-2}, where the latter upper bound is obtained 
via the univariate CF $\Lambda^{\nu_\x}_n$ evaluated at $0$. Importantly, this univariate CF $\Lambda^{\nu_\x}_n$ \emph{depends} on $\x\in\R^d$.  

\centerline{\emph{So for every $\x\in\R^d$, the new UCF score function is simply $\Lambda^{\nu_\x}_n(0)$.}}

To detect whether $\x\in\R^d$ is an outlier, we then use the strategy of \cite{Ducharlet2025} but now with the UCF $\Lambda^{\nu_\x}_n(0)$ instead of $\Lambda^{\mu_N}_n(\x)$. Like the original DyCF method, the UCF-based method maintains a focus on frugality, requiring very little tuning, actually only one hyperparameter. 

\subsection{Dichotomy property}
In this section we show that for every fixed $\x\in\R^d$,
the growth of $\Lambda^{\nu_\x}_n(0)$ with $n$
has the same nice and desirable dichotomy property as
$\Lambda^\mu_n(\x)$, namely at most polynomial inside $\om$ and at least exponential outside.
\begin{lem}
\label{lem0}
Let $\om\subset\R^d$ be compact with nonempty interior and let $\om$ be the closure of its interior.  
Let $\mu$ be a probability measure with $\mathrm{supp}(\mu)=\om$, and assume that $\mu$ has a density w.r.t. Lebesgue measure, bounded from below in $\mathrm{int}(\om)$.

(i) If $\x\in\mathrm{int}(\om)$ then  as $n$ increases,
the growth of $1/\Lambda^{\nu_\x}_n(0)$ is at most polynomial in $n$. 

(ii) If $\x\not\in\om$ then as $n$ increases,
the growth of $1/\Lambda^{\nu_\x}_n(0)$ is at least exponential in $n$. 
\end{lem}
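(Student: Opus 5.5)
For (i) I would not study $\nu_\x$ directly. Instead I would use the comparison \eqref{lien-1}--\eqref{lien-2}, which gives $\Lambda^\mu_{2n}(\x)\leq\Lambda^{\nu_\x}_n(0)$ for every $\x\in\R^d$. Inverting,
\[
\frac{1}{\Lambda^{\nu_\x}_n(0)}\;\leq\;\frac{1}{\Lambda^{\mu}_{2n}(\x)}\;=\;Q_{\mu,2n}(\x)\,.
\]
The hypotheses of Lemma~\ref{lem0} are those under which \cite[Lemma 4.3.1]{Lasserreetal2022} applies: $\om$ is the closure of its interior and the density is bounded below on $\mathrm{int}(\om)$. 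That result says $Q_{\mu,m}(\x)$ grows at most polynomially in $m$ for $\x\in\mathrm{int}(\om)$. Taking $m=2n$ yields (i) directly.

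If a self-contained argument is preferred, I would use the lower-bound density directly. Choose $r>0$ with $B(\x,r)\subset\om$. For any $q\in\R[t]_n$, pass to polar coordinates around $\x$ and substitute $t=s^2$:
\[
\int q(\Vert\y-\x\Vert^2)^2\,d\mu(\y)\;\geq\; c\int_{B(\x,r)} q(\Vert\y-\x\Vert^2)^2\,d\y\;=\;c'\int_0^{r^2} q(t)^2\,t^{d/2-1}\,dt .
\]
The right-hand side is a Jacobi-type weight on $[0,r^2]$. Its Christoffel function at the endpoint $0$ is known explicitly, of order $n^{-d}$ up to constants depending on $r$. Hence $1/\Lambda^{\nu_\x}_n(0)=O(n^{d})$.

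For (ii), suppose $\x\not\in\om$. Then $a_\x=\min_{\y\in\om}\Vert\y-\x\Vert^2>0$, so by \eqref{supp_nu} the support of $\nu_\x$ lies in $[a_\x,b_\x]$ and $0$ lies strictly to its left. I would use the variational form
\[
\frac{1}{\Lambda^{\nu_\x}_n(0)}\;=\;\sup_{q\in\R[t]_n}\frac{q(0)^2}{\int q^2\,d\nu_\x},
\]
and test it with the Chebyshev polynomial transported to $[a_\x,b_\x]$:
\[
q(t)=T_n\bigl(\ell(t)\bigr),\qquad \ell(t)=\frac{2t-a_\x-b_\x}{b_\x-a_\x}.
\]
Since $|q|\leq1$ on $[a_\x,b_\x]$ and $\nu_\x$ is a probability measure, $\int q^2\,d\nu_\x\leq 1$.

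Next I evaluate the test polynomial at $0$. Here $\xi:=\ell(0)=-(a_\x+b_\x)/(b_\x-a_\x)$ satisfies $|\xi|>1$. If $\om$ reduces so that $a_\x=b_\x$, then $\nu_\x$ is a Dirac mass and the bound is immediate. Otherwise the classical identity gives
\[
|T_n(\xi)|\;=\;\cosh\bigl(n\,\mathrm{arccosh}|\xi|\bigr)\;\geq\;\tfrac12\bigl(|\xi|+\sqrt{\xi^2-1}\bigr)^n .
\]
Therefore $1/\Lambda^{\nu_\x}_n(0)\geq\frac14\rho_\x^{2n}$ with $\rho_\x:=|\xi|+\sqrt{\xi^2-1}>1$, which is exponential growth. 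A cheaper alternative test polynomial is $q(t)=(1-t/b_\x)^n$. It gives $q(0)=1$ and $\int q^2\,d\nu_\x\leq(1-a_\x/b_\x)^{2n}$, which is also exponentially small.

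I do not expect a real obstacle. The only step needing care is (i): one must check that the cited interior polynomial-growth lemma for $Q_{\mu,m}$ is stated under exactly our hypotheses (density bounded below, $\om$ the closure of its interior). If it is not, I would fall back on the explicit endpoint asymptotics of the Jacobi weight $t^{d/2-1}$ on $[0,r^2]$. The point to verify there is that the evaluation point is the endpoint $0$, where the Christoffel function decays like $n^{-2(\alpha+1)}$ with $\alpha=d/2-1$. That is still polynomial.
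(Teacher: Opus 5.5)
Your proof is correct and follows the paper's proof. For (i) the paper uses exactly your comparison $1/\Lambda^{\nu_\x}_n(0)\le 1/\Lambda^\mu_{2n}(\x)=O(n^d)$ via \eqref{lien-1}--\eqref{lien-2} and the book's interior polynomial-growth lemma (it cites Lemma 4.3.2, not 4.3.1); for (ii) it uses precisely your ``cheaper alternative'' $q_n(z)=(1-z/b_\x)^n$, giving $\Lambda^{\nu_\x}_n(0)\le(1-a_\x/b_\x)^{2n}$. Your Jacobi-weight fallback for (i) and your Chebyshev test polynomial for (ii) are valid extras, and the latter gives the strictly larger growth base $\bigl((\sqrt{b_\x}+\sqrt{a_\x})/(\sqrt{b_\x}-\sqrt{a_\x})\bigr)^2>\bigl(b_\x/(b_\x-a_\x)\bigr)^2$, although the lemma only needs some exponential rate.
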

\begin{proof}
 (i) It is enough to observe that by \eqref{lien-1}-\eqref{lien-2},
\[1/\Lambda^{\nu_\x}_n(0)\,\leq\,1/\Lambda^\mu_{2n}(\x)=O(n^d)\,,\quad\forall \x\in\mathrm{int}(\om)\,,\]
and the last equality follows from e.g. 
\cite[Lemma 4.3.2]{Lasserreetal2022}.

Next to prove (ii) consider the univariate polynomial
\[z\mapsto 
q_n(z):=\left(1-\frac{z}{b_\x}\right)^n \in \R[z]_n.
\]
If satisfies $q_n(0)=1$, and for any $z\in[a_\x,b_\x]$,
	\[
	0 \le \frac{z}{b_\x} \le 1
	\quad\mathbb \Rightarrow\quad
	1 - \frac{z}{b_\x} \le 1 - \frac{a_\x}{b_\x} =: c_\x < 1\,,
	\]
    so that
	\[
		|q_n(z)| \le c_\x^n
		\qquad\text{for all }z\in[a_\x,b_\x]\,,
	\]	    
    and therefore
    \[
    \Lambda^{\nu_\x}_n(0)\,\leq\,\int q_n(z)^2 d\nu_\x(z)\,\leq\,  \int c_\x ^{2n} d\nu_\x(z) = c_\x ^{2n}.
    \]	
    Write $c_\x^{2n} = \exp(-2n|\log c_\x|)$ and set
	$d_\x := 2|\log c_\x|>0$ to obtain the desired result
		\[
		\Lambda_n^{\nu_\x}(0)
		\le \exp(-d_\x n)\,.
		\]
    
\end{proof}

\subsection{Approximation of the density of $\mu$}
In addition to the properties derived in the previous sections, we claim that the use of the pushforward $\nu_\x$ is interesting in its own right in the classical setting of $(\om,\mu)$ where $\om$
is compact and $\mu=f(\x)d\x$ is a probability measure with $\mathrm{supp}§(\mu)=\om$. Indeed when $f$ is continuous, one shows that one may approximate $f(\x)$ pointwise on $\om$ from the limit 
$\lim_{n\to\infty} n\,\Lambda^{\nu_\x}_n(z)$ for fixed $z>0$ small enough so that $\B(\x,\sqrt{z})\subset\om$. 
This is in contrast with analogue results for $\Lambda^\mu_n(\x)$, which involve the  equilibrium measure of $\om$, in general not known explicitly.

Let us develop our proposal further below. We here assume that $\mu$ is absolutely continuous with respect to the Lebesgue measure on $\om$ with integrable density $f$, i.e. $\mu(d\y)=f(\y)d\y$ on $\om$, with $f\in L^1(\om)$ and $\int_\om f\,d\y=1$. 

One interesting property of the CF is to provide an asymptotic result in terms of the underlying density. Namely, under some regularity assumption on $(\om,\mu)$,
\begin{equation}
    \label{equlibrium}
    \lim_{n\to\infty}{n+d\choose d}\,\Lambda^\mu_n(\x)\,=\,f(\x)/\omega_E(\x)\,,\quad\forall \x\in\mathrm{int}(\om)\,,
    \end{equation}
where $\omega_E$ is the density of the equilibrium measure (in pluripotential theory \cite{Baran,bedford,klimek}) associated with $\om$. However, and unfortunately, except for special geometries like the Euclidean ball, the unit box, or the simplex (and their image by an affine transformation), $\omega_E$ is not known in general, which makes \eqref{equlibrium} of limited practical interest. 

In contrast, we claim that the proposed UCF $\Lambda^{\nu_\x}_n$ allows to obtain an approximation of $f(\x)$, as closely as desired. The reason is that 
the support of the univariate measure $\nu_\x$, 
being an interval $I:=[a_\x,b_\x]$ of the real line (e.g. if $\om$ is connected), its  associated equilibrium
density $\omega_I$ (in pluripotential theory) is \emph{known} and equal to $1/\pi\sqrt{(z-a_\x)(b_\x-z)}$.

So, for every $\tau>0$, let $\B(\x,\tau):=\{\y: \Vert \y-\x\Vert<\tau\}$. 
\begin{lem}
    \label{lem1}
    Let $0<f$ be continuous on $\om$, and let $\x\in\mathrm{int}(\om)$. 
    Then for $z$ small enough so that $\B(\x,\sqrt{z})\subset\om$, it holds 
    \begin{equation}
        \label{lem1-1}\nu_\x([0,z])\,=\,\int_{\B(\x,\sqrt{z})}f(\y)\,d\y\,,
    \end{equation}
    and the density $z\mapsto  q_\x(z)$ of $\nu_\x$ reads
    \begin{equation}
        \label{lem1-2}q_\x(z)\,=\,
        \frac{z^{-1/2}}{2}\,\int_{\Vert \y-\x\Vert=\sqrt{z}}f(\y)\,d\sigma(\y)\,,
        \end{equation}
    where $\sigma$ is surface measure on the sphere $\partial B(\x;\sqrt{z})=\{\y\in\R^d: \Vert \y-\x\Vert=\sqrt{z}\}$.
\end{lem}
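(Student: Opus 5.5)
The plan is to compute the distribution function of $\nu_\x$ near $0$ directly from the definition of the pushforward, and then obtain the density by differentiating in $z$ with the coarea formula, written in polar coordinates centred at $\x$.

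\emph{Step 1: the distribution function.} Fix $\x\in\mathrm{int}(\om)$ and $z>0$ with $\B(\x,\sqrt{z})\subset\om$. By definition,
\[\nu_\x([0,z])=\mu(\{\y\in\om:\Vert\y-\x\Vert^2\le z\})=\mu(\{\y\in\om:\Vert\y-\x\Vert\le\sqrt{z}\}).\]
The set on the right is the closed ball $\overline{\B}(\x,\sqrt{z})$ intersected with $\om$. Since $\om$ is closed and contains $\B(\x,\sqrt{z})$, this set is the whole closed ball. The sphere $\partial\B(\x,\sqrt z)$ is Lebesgue-null and $\mu$ is absolutely continuous, so
\[\nu_\x([0,z])=\int_{\B(\x,\sqrt z)}f(\y)\,d\y,\]
which is \eqref{lem1-1}. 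As already noted in the text, $\nu_\x(\{0\})=0$, so it does not matter whether the interval is taken closed or open at $0$.

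\emph{Step 2: polar coordinates.} Write $\y=\x+r\theta$ with $r>0$ and $\theta\in S^{d-1}$. Then
\[F(z):=\nu_\x([0,z])=\int_0^{\sqrt z}\phi(r)\,dr,\qquad \phi(r):=\int_{\Vert\y-\x\Vert=r}f(\y)\,d\sigma(\y)=r^{d-1}\int_{S^{d-1}}f(\x+r\theta)\,d\theta.\]
Because $f$ is continuous on the compact set $\om$, the map $r\mapsto\phi(r)$ is continuous on $[0,\sqrt{z_0}]$ for any admissible $z_0$. This follows from dominated convergence, or simply from uniform continuity of $f$ on $\om$.

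\emph{Step 3: differentiate.} By the fundamental theorem of calculus and the chain rule, for $0<z<z_0$,
\[F'(z)=\phi(\sqrt z)\cdot\frac{1}{2\sqrt z}=\frac{z^{-1/2}}{2}\int_{\Vert\y-\x\Vert=\sqrt z}f(\y)\,d\sigma(\y),\]
which is \eqref{lem1-2}. Since $F$ is absolutely continuous on $[0,z_0]$, with $F(0)=0$ and $F'=q_\x$ there, $q_\x$ is indeed the density of $\nu_\x$ on that range, and not merely the derivative of $F$.

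\emph{Main obstacle.} The only delicate point is justifying the continuity of $\phi$, so that the fundamental theorem of calculus applies pointwise at every $z$ rather than only almost everywhere. Continuity of $f$ settles this, and the hypothesis $f>0$ is not even needed. Two minor points should also be stated: the identity holds only for $z$ below the threshold $z_0$ where the ball stays inside $\om$, and near $z=0$ the density behaves like $\tfrac12|S^{d-1}|f(\x)z^{d/2-1}$. In particular the density may blow up at $0$ when $d=1$, which does not affect the statement, since it concerns $z>0$.
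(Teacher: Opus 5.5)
Your proof is correct and takes the same route as the paper: read off $\nu_\x([0,z])=\mu(\B(\x,\sqrt{z}))$ directly from the definition of the pushforward, then differentiate in $z$. You also supply details the paper leaves implicit: closedness of $\om$ and Lebesgue-nullity of the boundary sphere, the polar-coordinate representation, continuity of $\phi$ so the fundamental theorem of calculus holds at every $z$, and absolute continuity of $F$ so that $q_\x$ is genuinely the density.
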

\begin{proof}
It holds
\[\nu_\x([0,z])\,=\,\mu(g_\x^{-1}([0,z]))\,=\,\mu(\B(\x,\sqrt{z}))\,,\quad\mbox{as soon as $\B(\x,\sqrt{z})\subset\om$}\,,\]
and therefore,
\[\nu_\x([0,z])\,=\,\int_{\B(\x,\sqrt{z})}f(\y)\,d\y\,,\]
which is \eqref{lem1-1}. Next, to obtain \eqref{lem1-2} 
just take the derivative with respect to $z$.
\end{proof}
We next leverage the important asymptotic property of univariate measures on an interval, namely Theorem \ref{thm1}, to obtain an approximation of the density $f(\x)$ of $\mu$ (for $\x\in\mathrm{int}(\om)$) via that of $\nu_\x$ at $0$.

\begin{thm}
\label{thm2}
Let $f$ be continuous, strictly positive on $\om$,
and let $\x\in\mathrm{int}(\om)$. Then for all
$z>0$ small enough to ensure $\B(\x,\sqrt{z})\subset\om$,
    \begin{equation}
        \label{thm2-1}q_\x(z)\,\approx\,
        f(\x)\frac{\pi^{d/2}\,z^{d/2-1}}{\Gamma(d/2)}\,,
                        \end{equation}
                        and if $\om$ is connected, then for almost all $z$ small enough to ensure $\B(\x,\sqrt{z})\subset\om$,
\begin{equation}
        \label{thm2-2}
        \lim_{n\to\infty}n\,\Lambda^{\nu_\x}_n(z)
        \,=\,q_\x(z)\pi\,\sqrt{z(b_\x-z)}\,.\end{equation}
        In particular, from \eqref{thm2-2} and \eqref{thm2-1}, it holds
        \begin{equation}
                \label{thm2-3}
        \lim_{n\to\infty}n\,\Lambda^{\nu_\x}_n(z)\,\approx\,
        f(\x)\frac{\pi^{d/2+1}\,z^{(d-1)/2}\,\sqrt{b_\x-z}}{\Gamma(d/2)}\,.
        \end{equation}
            \end{thm}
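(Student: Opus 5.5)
The plan is to establish the three displays in order, since \eqref{thm2-3} follows by substituting \eqref{thm2-1} into \eqref{thm2-2}.

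\emph{Step 1: the approximation \eqref{thm2-1}.} Start from Lemma \ref{lem1}, formula \eqref{lem1-2}. Since $f$ is continuous at $\x$, for $z$ small we have $f(\y)=f(\x)+o(1)$ uniformly on the sphere $\{\Vert\y-\x\Vert=\sqrt z\}$. Hence
\[\int_{\Vert \y-\x\Vert=\sqrt{z}}f\,d\sigma=(f(\x)+o(1))\,\sigma(\partial\B(\x,\sqrt z)).\]
The surface area of this sphere is $\sigma(\partial\B(\x,\sqrt z))=\frac{2\pi^{d/2}}{\Gamma(d/2)}z^{(d-1)/2}$. Multiplying by $z^{-1/2}/2$ gives
\[q_\x(z)=(f(\x)+o(1))\,\frac{\pi^{d/2}z^{d/2-1}}{\Gamma(d/2)},\]
which is \eqref{thm2-1}. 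Here ``$\approx$'' means equality up to a factor $1+o(1)$ as $z\downarrow0$. I will state this precisely, with the error controlled by the modulus of continuity of $f$ on $\B(\x,\sqrt z)$.

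\emph{Step 2: the limit \eqref{thm2-2}.} Since $\x\in\om$ and $\om$ is connected, $a_\x=0$ and $\mathrm{supp}(\nu_\x)=[0,b_\x]$. Take the affine map $\phi(z)=2z/b_\x-1$ from $[0,b_\x]$ onto $[-1,1]$ and let $\tilde\nu=\phi\#\nu_\x$. Affine changes of variable preserve polynomial degree, so $\Lambda^{\nu_\x}_n(z)=\Lambda^{\tilde\nu}_n(\phi(z))$. The density of $\tilde\nu$ is $\tilde u(t)=\frac{b_\x}{2}q_\x(\phi^{-1}(t))$. Theorem \ref{thm1} then gives, for a.e. $t$,
\[\lim_{n\to\infty} n\Lambda^{\tilde\nu}_n(t)=\pi\tilde u(t)\sqrt{1-t^2}.\]
With $t=\phi(z)$ we have $1-t^2=4z(b_\x-z)/b_\x^2$, so the right-hand side equals $\pi q_\x(z)\sqrt{z(b_\x-z)}$. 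A null set in $t$ corresponds to a null set in $z$, which yields \eqref{thm2-2} for almost every $z$.

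\emph{Main obstacle: the Szeg\H{o} condition.} Theorem \ref{thm1} requires $\tilde\nu$ to be absolutely continuous and to satisfy
\[\int_{-1}^1\frac{\log\tilde u(t)}{\sqrt{1-t^2}}\,dt>-\infty.\]
Absolute continuity of $\nu_\x$ follows from the coarea formula, since $\nabla g_\x\neq0$ off the single point $\x$. The integrability of $\log \tilde u$ is more delicate and I would check it in three regimes.

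\begin{itemize}
\item Near $z=0$, Step 1 gives $\log q_\x(z)\sim(d/2-1)\log z$, which is integrable against the weight $z^{-1/2}$.
\item On any compact subinterval of $(0,b_\x)$, one has $q_\x>0$ since $f>0$ on $\om$, $\om$ is connected and the sphere meets $\mathrm{int}(\om)$. Boundedness of $q_\x$ there needs control of $\sigma(\om\cap\partial\B(\x,\sqrt z))$.
\item Near $b_\x$, the density may vanish, and this is the genuinely hard part. For a general compact $\om$ the rate of vanishing depends on the geometry of $\om$ near its farthest points from $\x$.
\end{itemize}

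If the Szeg\H{o} condition cannot be verified in full generality, I will add a mild regularity hypothesis on $\om$ to make the statement rigorous. One option is a cone-type condition at the farthest points, giving polynomial vanishing of $q_\x$ at $b_\x$ and hence an integrable logarithm. The other is to invoke Theorem \ref{thm1} in its local form, whose conclusion at $z$ depends only on local regularity together with Erd\H{o}s-Tur\'an regularity of the measure.
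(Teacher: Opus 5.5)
Your outline matches the paper's proof. You get \eqref{thm2-1} from \eqref{lem1-2} and continuity of $f$ at $\x$ (the paper invokes the mean value theorem for integrals). You get \eqref{thm2-2} from Theorem~\ref{thm1} transported to $[0,b_\x]$, and everything rests on the Szeg\H{o} condition \eqref{aux-szego}. Your Steps 1 and 2 are correct, and you make explicit two points the paper skips: the affine rescaling and the absolute continuity of $\nu_\x$.

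At the Szeg\H{o} step your caution is justified, and the paper does not supply what you are missing. It integrates the bound $q_\x(z)>\delta\frac{z^{-1/2}}{2}\sigma(\partial\B(\x,\sqrt z))=\frac{\delta\pi^{d/2}}{\Gamma(d/2)}z^{d/2-1}$ over all of $(0,b_\x)$. But this bound comes from \eqref{lem1-2}, which holds only while $\B(\x,\sqrt z)\subset\om$; beyond that, only $\om\cap\partial\B(\x,\sqrt z)$ contributes. Under the stated hypotheses \eqref{aux-szego} can in fact fail. So as written, neither your plan nor the paper's argument proves \eqref{thm2-2}. The genuine gap is a lower bound on $q_\x$ away from $z=0$.

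(a) Your middle-regime bullet does not hold. Connectedness gives only $\om\cap\partial\B(\x,\sqrt z)\neq\emptyset$, not positive surface measure, and pointwise positivity would not give $\log q_\x\in L^1$ anyway. Suppose $\om$ is $\overline{\B(\x,r_0)}$ plus a region attached to it only at the tip of a radially oriented cusp of width $e^{-1/s}$ at distance $s$ from the tip. Then $\log q_\x(z)\approx -c/(z-r_0^2)$ just above the interior point $r_0^2$, and \eqref{aux-szego} diverges. Worse, take a fat Cantor set of radii with complementary intervals $I_k$, and build over each $I_k$ a thin sector within angle $|I_k|$ of a fixed direction. This makes $q_\x=0$ on a set of positive measure. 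So any added hypothesis must control every radius, not only the farthest points. Upper bounds on $q_\x$ are never needed: $\int\log^+q_\x\,(z(b_\x-z))^{-1/2}\,dz<\infty$ holds for any probability density on $[0,b_\x]$.

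(b) One hypothesis that works is that $\om$ be star-shaped with respect to every point of some ball $\B(\x,r)\subset\om$, for instance $\om$ convex. Let $p\in\om$ be a farthest point and $R=\sqrt{b_\x}$. The cone $\mathrm{conv}(\overline{\B(\x,r)}\cup\{p\})$ lies in $\om$. On each sphere of radius $\rho\in[r,R)$ about $\x$, it contains the cap of angular radius $r(R-\rho)/R^2$ around the direction $p-\x$. Hence $q_\x(z)\ge c\,(b_\x-z)^{d-1}$ on $[r^2,b_\x)$, and together with your estimate near $0$ this yields \eqref{aux-szego}.

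(c) The local alternative is Totik's localized form of Theorem~\ref{thm1}. It needs $\log q_\x\in L^1$ only near the small $z$ of interest, which you have. But it also requires $\nu_\x$ to be regular in the Stahl--Totik sense. The standard sufficient condition, $q_\x>0$ a.e.\ on $[0,b_\x]$ (Erd\H{o}s--Tur\'an), fails in the fat Cantor example. So this route too needs an assumption beyond those of the theorem.
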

\begin{proof}
Let $\x\in\mathrm{int}(\om)$. Eq \eqref{thm2-1} follows from
\eqref{lem1-2} and the integral form of the Mean Value Theorem. Next to get \eqref{thm2-2}-\eqref{thm2-3}
it suffices to prove that
the density $q_\x(z)$ belongs to the Szeg\"o class, i.e.,
\begin{equation}
\label{aux-szego}
    \int_0^{b_\x} \frac{\log q_\x(z)}{\sqrt{z\,(b_\x-z)}}dz\,>\,-\infty\,.\end{equation}
Indeed as $\x\in\mathrm{int}(\om)$ and $\om$ is connected, $\mathrm{supp}(\nu_\x)=[0,b_\x]$,
and so by Theorem \ref{thm1}:
\[\lim_{n\to\infty} n\,\Lambda^{\nu_\x}_n(z)\,=\,q_\x(z)\,\pi\,\sqrt{z(b_\x-z)}\,\quad\mbox{a.e. on $[0,b_\x]$,}\]
and therefore, for almost all $z$ small enough to ensure $\B(\x,\sqrt{z})\subset\om$, one obtains
\begin{eqnarray*}
    \lim_{n\to\infty} n\,\Lambda^{\nu_\x}_n(z)&=&q_\x(z)\,\pi\,\sqrt{z(b_\x-z)}\\
        &\approx&f(\x)\,\frac{\pi^{d/2+1}\,z^{(d-1)/2}\,\sqrt{b_\x-z}}{\Gamma(d/2)}\,,
        \end{eqnarray*}
        which is \eqref{thm2-3}.
        To prove \eqref{aux-szego} observe that as $f>0$ on $\om$ and $f$ being continuous, $f>\delta$ on $\om$ for some $\delta>0$. Therefore
        \[q_\x(z)\,>\,\delta\frac{z^{-1/2}}{2}\,\sigma(\partial\B(\x,\sqrt{z}))
        \,=\,\frac{\delta\,\pi^{d/2}}{\Gamma(d/2)}z^{d/2-1}\]
        and so
        \begin{eqnarray*}
            \int_0^{b_\x}\frac{\log q_\x(z)}{\sqrt{z(b_\x-z)}}dz&>&\frac{\delta\,\pi^{d/2}}{\Gamma(d/2)}\int_0^{b_\x}\frac{\log z^{d/2-1}}{\sqrt{z(b_\x-z)}}dz\\
        &=&\frac{\delta\,\pi^{d/2}\,(d/2-1)}{\Gamma(d/2)}\int_0^{b_\x}\frac{\log z}{\sqrt{z(b_\x-z)}}dz
        \end{eqnarray*}
        which is finite.
\end{proof}
So in view of \eqref{thm2-3}, notice that $n\,\Lambda^{\nu_\x}_n(z)$ behaves like $a\,z^{(d-1)/2}$ for small $z$ and large $n$ (for some constant $a$), a nice smooth behavior as soon as $d\geq 2$.

So the correct scaling to analyze the asymptotic behavior of $\Lambda^{\nu_\x}_n(z)$ with respect to the density $f$ of $\mu$ at a point $\x\in\mathrm{int}(\om)$ is:
\[\frac{\Gamma(d/2)}{\pi^{1+d/2}}\cdot\frac{n\,\Lambda^{\nu_\x}_n(z)}{z^{(d-1)/2}}\quad ( \approx f(\x)\,\sqrt{b_\x})\]
understood for very small $z>0$ and large degree $n$. In doing so one obtains the approximation
\[f(\x)\,\approx\,\frac{\Gamma(d/2)}{\sqrt{b_\x}\,\pi^{1+d/2}}\cdot\frac{n\,\Lambda^{\nu_\x}_n(z)}{z^{(d-1)/2}}\,.\]

\subsection{Example: the arcsine interval}
Let us use a simple one-dimensional example to
describe how the density of the univariate pushforward measure $\nu_\x$ looks like
(not only in a neighborhood of $0$), and to illustrate 
Theorem \ref{thm2}, i.e., emphasize that what matters to recover the density is
the asymptotic behavior of $n\Lambda^{\nu_x}_n(z)$ as $n\to\infty$ \emph{for small} $z>0$
and not for $z=0$.
For this purpose, we use the arcsine interval. 

Let \(d=1\), \(\om=[-1,1]\), and let
\[
d\mu(y)=\frac{1}{\pi\sqrt{1-y^2}}\,\mathbf 1_{(-1,1)}(y)\,dy\,,
\]
i.e., $\mu$ is the classical arcsine measure on \([-1,1]\) with density $f(y)=1/(\pi\sqrt{1-y^2})$. It is the canonical example
for which the scaled Christoffel function is asymptotically constant in the
interior of the support: for \(|x|<1\), one has
\[
( n+1)\Lambda_n^\mu(x)\longrightarrow \frac{f(x)}{\omega(x)}\,=\,1\,,
\]
uniformly on compact subsets of $(-1,1)$, where $\omega(x)$ is the density of the equilibrium measure of the interval $[-1,1]$ (and here $\omega=f$). At the endpoints,
\[
(n+1)\Lambda_n^\mu(\pm1)\longrightarrow \frac12.
\]

Consider the corresponding pushforward measure $\nu_x$. From its definition (cf. \eqref{supp_nu}), the support of $\nu_x$ reads:
\[
\operatorname{supp}(\nu_x)=
\begin{cases}
[0,(1+|x|)^2], & |x|\le 1,\\[1mm]
[(|x|-1)^2,(|x|+1)^2], & |x|>1,
\end{cases}
\]
and its density is
\[q_x(z)\,=\,
\frac{d\nu_x}{dz}(z)
=
\frac{1}{2\pi\sqrt z}
\sum_{\sigma\in\{\pm1\}}
\frac{\mathbf 1_{\{-1\le x+\sigma\sqrt z\le 1\}}}
{\sqrt{1-(x+\sigma\sqrt z)^2}},
\qquad z>0.
\]
In particular,
\[
q_0(z)=\frac{1}{\pi\sqrt{z(1-z)}}\,\mathbf 1_{(0,1)}(z)\,dz,
\]
and
\[
q_{\pm1}(z)
=
\frac{1}{2\pi z^{3/4}\sqrt{2-\sqrt z}}\,
\mathbf 1_{(0,4)}(z)\,dz.
\]



\begin{figure}[h!]
\centering

\begin{subfigure}[t]{0.48\textwidth}
\centering
\begin{tikzpicture}
\begin{axis}[
    width=\textwidth,
    height=6.3cm,
    xmin=0, xmax=2.4,
    ymin=0, ymax=3.0,
    axis lines=left,
    xlabel={$z$},
    ylabel={density},
    title={Case \(|x|=\frac12<1\)},
    xtick={0.25,2.25},
    xticklabels={$(1-|x|)^2$,$(1+|x|)^2$},
    ytick=\empty,
    clip=false
]

\addplot[
    black, ultra thick,
    domain=0.01:0.245,
    samples=350,
    restrict y to domain*=0:3
]
{(1/pi)*(1/sqrt(x))*(1/sqrt(3 - 4*sqrt(x) - 4*x) + 1/sqrt(3 + 4*sqrt(x) - 4*x))};

\addplot[
    black, ultra thick,
    domain=0.245:2.255,
    samples=350,
    restrict y to domain*=0:3
]
{1/(pi*sqrt(x)*sqrt(3 + 4*sqrt(x) - 4*x))};

\addplot[dashed] coordinates {(0.25,0) (0.25,3.0)};
\addplot[dashed] coordinates {(2.25,0) (2.25,3.0)};


\end{axis}
\end{tikzpicture}
\end{subfigure}
\hfill
\begin{subfigure}[t]{0.48\textwidth}
\centering
\begin{tikzpicture}
\begin{axis}[
    width=\textwidth,
    height=6.3cm,
    xmin=0, xmax=6.6,
    ymin=0, ymax=1.1,
    axis lines=left,
    xlabel={$z$},
    ylabel={density},
    title={Case \(|x|=\frac32>1\)},
    xtick={0.25,6.25},
    xticklabels={$(|x|-1)^2$,$(|x|+1)^2$},
    ytick=\empty,
    clip=false
]

\addplot[
    black, ultra thick,
    domain=0.27:6.26,
    samples=400,
    restrict y to domain*=0:1.1
]
{1/(pi*sqrt(x)*sqrt(-5 + 12*sqrt(x) - 4*x))};

\addplot[dashed] coordinates {(0.25,0) (0.25,1.1)};
\addplot[dashed] coordinates {(6.25,0) (6.25,1.1)};

\end{axis}
\end{tikzpicture}
\end{subfigure}

\caption{Densities $q_x(z)$ of the pushforward measure $\nu_x$ for the arcsine measure on \([-1,1]\). Left: the case \(|x|=\tfrac12<1\). Right: the case \(|x|=\tfrac32>1\).}
\label{fig:nu-x-density-arcsine}
\end{figure}

Figure~\ref{fig:nu-x-density-arcsine} displays the density of \(\nu_x\) in the two regimes
$x\in \mathrm{int}(\om)$ and $x\not\in\om$, 
for the representative values \(|x|=\tfrac12\) and \(|x|=\tfrac32\).
When \(0\le |x|<1\), the support starts at \(0\), and the density has an integrable hard-edge singularity at the origin of order \(z^{-1/2}\). In addition, at the interior point \(z=(1-|x|)^2\), one of the two preimages reaches the endpoint of \([-1,1]\), which creates a second square-root singularity. When \(|x|>1\), the support is separated from the origin, and the density is carried by the interval \([(|x|-1)^2,(|x|+1)^2]\); it then has square-root singularities at both endpoints of the support, corresponding to the two boundary points \(y=\pm1\) of the original arcsine measure.

Recall that $f(x)=1/(\pi\sqrt{1-x^2})$ whenever $x\in\mathrm{int}(\om)$.
Hence, with $x\in\mathrm{int}(\om)$ and very small $z$, observe that
\[q_x(z)\,\pi\sqrt{z\,(b_x-z)}\,\approx\frac{1}{2\pi\sqrt{z}}\times \frac{2}{\sqrt{1-x^2}}\times 
\pi\,\sqrt{z}\sqrt{b_x}\,=\,\frac{\sqrt{b_x}}{\sqrt{1-x^2}}\,.\]
which is the same as
\[f(x)\,\frac{\pi^{3/2}z^{(d-1)/2}\sqrt{b_x}}{\Gamma(1/2)}=f(x)\pi\sqrt{b_x}\,=\,\frac{\sqrt{b_x}}{\sqrt{1-x^2}}\,,\]
in accordance with \eqref{thm2-2}-\eqref{thm2-3} for small $z>0$. 

This example also illustrates that to recover asymptotic properties of the original measure $\mu$ (e.g. its density $f$) via the pushforward $\nu_x$ (for $x\in\mathrm{int}(\om)$), what matters
is the behavior of the latter at small $z>0$ and not at $z=0$.
Indeed $\Lambda^{\nu_0}_n(0)=1/(2n+1)$  and so
\[\frac{1}{2}\,=\,\lim_{n\to\infty}n\,\Lambda^{\nu_0}_n(0)\,\neq\,f(0)\,\frac{\pi^{3/2}\sqrt{b_x}}{\Gamma(1/2)}\,=\,1\,=\,\lim_{n\to\infty}n\,\Lambda^{\nu_0}_n(z)\quad\mbox{for small $z>0$.}
\]

\begin{figure}[t]
\centering
\includegraphics[width=0.78\textwidth]{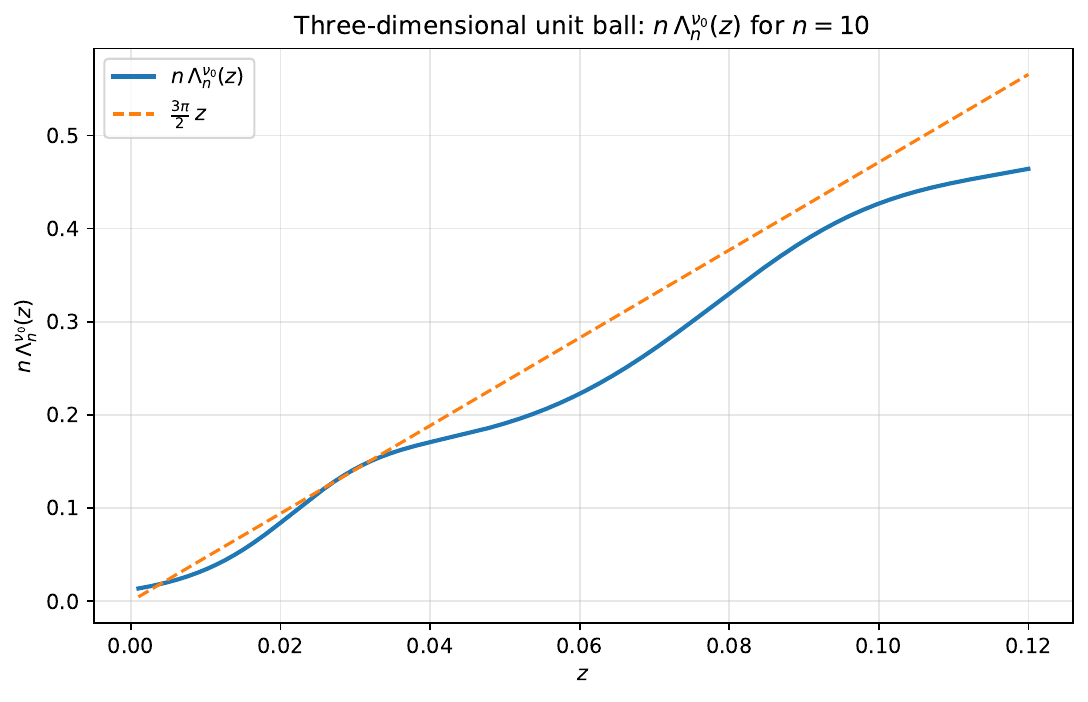}
\caption{The solid curve is $\Lambda_n^{\nu_0}(z)$ for small $z>0$ and  degree \(n=10\), where $\nu_0$ is the pushforward by \(y\mapsto \|y\|^2\) of the uniform measure on the unit ball in dimension $d=3$. The dashed line is the small-\(z\) asymptotic profile.}
\label{fig:ball3-small-z}
\end{figure}

The example of Figure~\ref{fig:ball3-small-z} illustrates the same mechanism in
a smoother setting. Let
\[
\Omega=B(0,1)\subset\mathbb R^3,
\qquad
d\mu(y)=\frac{3}{4\pi}\mathbf 1_{\{\|y\|\le 1\}}\,dy,
\qquad
\x=0.
\]
Then the pushforward by \(y\mapsto \|y\|^2\) has density
\[
q(z)=\frac32\sqrt z\,\mathbf 1_{[0,1]}(z).
\]
The solid curve is
\[
z\longmapsto n\,\Lambda_n^{\nu_0}(z),
\qquad z\in[10^{-3},1.2\cdot10^{-1}],
\]
for \(n=10\), and the dashed line is the small-\(z\) asymptotic profile
\[
\frac{3\pi}{2}\,z
=
f(0)\,\frac{\pi^{5/2}}{\Gamma(3/2)}\,z.
\] Since the pushforward density behaves like $q(z)\sim \frac32\sqrt z$ for small $z$,
the bulk asymptotic profile of \(n\,\Lambda_n^{\nu_0}(z)\) is linear at first
order:
\[
n\,\Lambda_n^{\nu_0}(z)\approx \frac{3\pi}{2}\,z.
\]
Thus, unlike the one- or two-dimensional cases, the three-dimensional profile is differentiable at the origin.
This makes the near-edge bulk behavior easier to visualize and supports the
general principle that, for \(d\ge 3\), the function
\[
f(x)\,\frac{\pi^{1+d/2}}{\Gamma(d/2)}\,\sqrt{b_x}\,z^{(d-1)/2}
\]
provides a smooth local model near \(z=0\).

\section{Experiments}
\label{sec:experiments}
To evaluate the performance of UCF, we conducted a comprehensive set of experiments within a shallow unsupervised anomaly detection framework. Our primary goal is to assess the effectiveness, robustness, and scalability of UCF across diverse datasets and baseline methods. For full reproducibility, the code and experimental configurations are available on GitHub\footnote{GitHub link: \url{https://github.com/fgrivet/ucf-scalable-ad-in-any-dimension}}.


\subsection{Datasets and Baselines}
\label{sec:datasets_algorithms}
Our evaluation leverages the ADBench benchmark suite \cite{adbench}, which comprises 47 datasets spanning a wide range of feature dimensions, sample sizes, and anomaly ratios. The properties of these datasets are summarized in \cref{fig:datasets_properties} and detailed in Appendix \ref{app:dataset_properties}, \cref{tab:datasets_properties}.

To better understand the strengths and weaknesses of UCF, we categorize the datasets based on two criteria:

\vspace{-0.5em}
\begin{minipage}[t]{0.61\textwidth}
    \begin{enumerate}[label=\arabic*., leftmargin=0pt, itemsep=0pt]
        \item \textbf{Category} (as in \cite{liscalable}):
        \begin{enumerate}[label=(\alph*), leftmargin=*, itemsep=0pt]
            \item \textit{Small:} $N \leq 1 000$ and $d \leq 50$.
            \item \textit{Medium:} $1000 < N \leq 10 000$ and $d \leq 50$;
            \item \textit{Large:} $N > 10 000$ and $d \leq 50$;
            \item \textit{High-dimensional:} $d > 50$;
        \end{enumerate}
    \end{enumerate}
\end{minipage}%
\begin{minipage}[t]{0.34\textwidth}
    \begin{enumerate}[label=\arabic*., leftmargin=0pt, start=2, itemsep=0pt]
        \item \textbf{Anomaly ratio}:
        \begin{enumerate}[label=(\alph*), leftmargin=*, itemsep=0pt]
            \item Less than 3\%;
            \item Between 3\% and 8\%;
            \item Between 8\% and 13\%;
            \item Between 13\% and 25\%; 
            \item Greater than 25\%
        \end{enumerate}
    \end{enumerate}
\end{minipage}

\begin{figure}[H]
    \centering
    \resizebox{0.6\textwidth}{!}{\begin{tikzpicture}
\begin{axis}[
    clip=false,
    ybar stacked,
    bar width=30pt,
    nodes near coords,
    legend style={at={(0.5, 1.10)}, fill=white, legend image post style={scale=0.8}, nodes={scale=0.6, transform shape,  at={(-1, 0.09)}}, anchor=north, legend columns=-1, /tikz/every even column/.append style={column sep=0.5cm},},
    xlabel={Category},
    ylabel={Count},
    symbolic x coords={Small, Medium, Large, High-dimensional},
    xtick=data,
    enlarge x limits=0.2,
    width=10cm,
    name=barplot,
]

\addplot+[ybar, fill=blue!50] plot coordinates {
    (Small, 1)
    (Medium, 4)
    (Large, 6)
    (High-dimensional, 3)
};
\addlegendentry{$< 3\%$}

\addplot+[ybar, fill=orange!50] plot coordinates {
    (Small, 4)
    (Medium, 4)
    (Large, 3)
    (High-dimensional, 2)
};
\addlegendentry{$3\% - 8\%$}

\addplot+[ybar, fill=green!50] plot coordinates {
    (Small, 2)
    (Medium, 2)
    (Large, 0)
    (High-dimensional, 2)
};
\addlegendentry{$8\% - 13\%$}

\addplot+[ybar, fill=red!50] plot coordinates {
    (Small, 2)
    (Medium, 2)
    (Large, 1)
    (High-dimensional, 1)
};
\addlegendentry{$13\% - 25\%$}

\addplot+[ybar, fill=yellow!50] plot coordinates {
    (Small, 3)
    (Medium, 3)
    (Large, 1)
    (High-dimensional, 1)
};
\addlegendentry{$> 25\%$}

\node[above] at (rel axis cs:0.5, 1.10) {\small{\textbf{Anomaly Ratio}}};

\end{axis}

\end{tikzpicture}}
    \caption{Number of datasets per category and anomaly ratio}
    \label{fig:datasets_properties}
\end{figure}

To provide a robust comparison, we evaluate the performance of UCF against 10 established baselines spanning a variety of anomaly detection paradigms: proximity-based (KNN), clustering-based (CBLOF), density-based (KDE, HBOS, ECOD, GMM, OCSVM), projection-based (LODA, PCA), and tree-based (IForest) methods. Additionally, we compare UCF with other Christoffel-based approaches where applicable, namely: DyCF \cite{Ducharlet2025}, DyCG \cite{Ducharlet2025}, and KernelCF \cite{askari2018}.

Below, we outline the core principles of each baseline.

\paragraph{KNN}
The $k$-Nearest Neighbors (KNN) algorithm \cite{knn} computes the anomaly score by measuring the distances (typically Euclidean, but other distances such as the Gini parametric \cite{gini} can also be used) to the $k$-th nearest neighbors of each point.

\paragraph{CBLOF}
The Cluster-Based Local Outlier Factor (CBLOF) \cite{cblof} first clusters the data (e.g. using K-means \cite{kmeans}), and then calculates the anomaly score of a sample as the product of its cluster size and its distance to the centroid of a so-called \textquote{large cluster}.

\paragraph{KDE}
Kernel Density Estimation (KDE) \cite{kde} estimates the probability density function of the data using a kernel function (e.g. Gaussian, exponential, or Epanechnikov).

\paragraph{HBOS}
Histogram-Based Outlier Score (HBOS) \cite{hbos} constructs univariate histograms with bins of equal-width for each dimension. The outlier score is the sum of the bin heights.

\paragraph{ECOD}
Empirical Cumulative Distribution Functions (ECOD) \cite{ecod} estimate the Empirical Cumulative Distribution Function (ECDF) of each variable separately. The outlier score is given by the sum of the minimum of the left and right tail probabilities.

\paragraph{GMM}
Gaussian Mixture Models (GMMs) \cite{gmm} establish clusters as a combination of Gaussian distributions, allowing the computation of the probability that each data point belongs to a particular cluster.

\paragraph{OCSVM}
The One-Class Support Vector Machine (OCSVM) \cite{ocsvm} uses the kernel trick to map data to a higher-dimensional space, and then finds the optimal hyperplane that maximizes the margin between normal data points and the origin. Anomalies are identified as data points far from this hyperplane.

\paragraph{LODA}
The Lightweight on-line detector of anomalies (Loda) \cite{loda} approximates the probability density of the input data projected onto a single projection vector.
The outlier score derived from the average log-probability across projection vectors.

\paragraph{PCA}
Principal Component Analysis (PCA) \cite{pca} reduces the dimensionality by selecting the eigenvectors of the covariance matrix with the highest eigenvalues as projection vectors. The outlier score is given by the sum of the weighted Euclidean distance between each sample and the hyperplane constructed by these selected eigenvectors.

\paragraph{IForest}
Isolation Forest (IForest) \cite{iforest} randomly partitions the input space, and its outlier score is given by the inverse number of splits necessary to isolate a given sample.

\paragraph{KernelCF}
Kernel-based Outlier Detection 
(KernelCF) \cite{askari2018} introduces a regularization parameter $\rho>0$ to ensure that $\M_n(\mu)+\rho\,\mathbf{I}$ is invertible. This is very useful for large dimension $d$ when 
the sample size $N$ is not large enough to ensure that $\M_n(\mu)$ is non singular (in which case the standard CF is not defined). When $\M_n(\mu)$ is invertible, the resulting regularized CF is a lower bound for the standard CF, and otherwise is interpreted as a ridge regression problem (see Eq. (17) in \cite{askari2018}).
By using an appropriate polynomial basis and leveraging kernel methods, the computational bottleneck is shifted from inverting a matrix whose size grows exponentially with the data dimension $d$ (when $\M_n(\mu)$ is invertible) to inverting a matrix whose size now scales with the size $N$ of the training sample
(independent of $d$). Moreover the degree $n$ only impacts the entries of this fixed size matrix.
While on the one hand this method introduces a regularization parameter $\rho$ to tune, on the other hand it can also use non polynomial kernels (but at the price of introducing new parameters to tune, e.g. the length scale $\sigma$ for RBF kernel).

\subsection{Experimental Setup}
To ensure a fair and reproducible comparison, we use the default implementations of all baseline algorithms from the PyOD library\footnote{\url{https://github.com/yzhao062/pyod}}. 
For UCF and DyCF, we compute degrees ranging from 2 to 8 (where feasible for DyCF), and report the result for the optimal degree\footnote{With respect to the Average Precision (AP) metric defined in \cref{sec:evaluation_metric}.} on each dataset. For DyCG, we use degrees from 2 to 5. KernelCF is configured with default parameters as recommended in \cite{askari2018}, employing both the linear (lin) and RBF (RBF) kernels.

For baseline methods, we standardize the data to zero mean and unit variance, as recommended in \cite{adbench}. For Christoffel-based methods (including UCF), we scale the data to the range [-1, 1] to ensure numerical stability in the computation of Chebyshev polynomials. 

Each experiment is repeated 5 times per dataset and algorithm, with results averaged across runs. For each repetition, we employ a stratified split with a fixed test ratio of 30\%. 

To guaranty consistency, we enforce a 2-hour time limit and 16 GB memory cap per run. All experiments were conducted on a Slurm HPC architecture equipped with AMD EPYC Milan 7713, 64C (2.0GHz-225W) processors.

\subsection{Evaluation metrics}
\label{sec:evaluation_metric}
We evaluate the performance of each algorithm using AP (Average Precision), and AUROC (Area Under the Receiver Operating Characteristic Curve), which respectively quantify the trade-off between precision and recall, and between the true positive rate (TPR) and false positive rate (FPR). Note that average precision is particularly informative for imbalanced datasets, as it focuses on the positive (anomaly) class and directly reflects the trade-off between precision and recall, whereas AUROC can be overly optimistic in such cases. 

We also report training time, inference time, and total execution time to assess computational efficiency.

Precision and recall are defined as:
\begin{equation*}
    \text{Precision} = \frac{TP}{TP + FP}, \quad \text{Recall} = \frac{TP}{TP + FN},
\end{equation*}

while TPR and FPR are given by:
\begin{equation*}
    \text{TPR} = \frac{TP}{TP + FN}, \quad \text{FPR} = \frac{FP}{FP + TN},
\end{equation*}

where TP, FP, TN, and FN denote true positives, false positives, true negatives, and false negatives, respectively.

\subsection{Results}
The averaged AP scores for the 5 repetitions and each dataset category are reported in \cref{tab:ap_performances_summary}, while results grouped by anomaly ratio are available in \cref{tab:ap_performances_summary_ar}. \cref{fig:boxplot_all_datasets} presents a boxplot of the average precision for each algorithm across all 47 ADBench datasets. \cref{tab:it_performances_summary} shows the inference time in seconds per sample for each category of dataset and each algorithm. Detailed per-dataset results for all methods and metrics are available in the supplemental materials. 

\begin{table}[h]
\centering
\caption{AP scores per datasets categories}
\vspace{-2em}
\caption*{\small Best result in \textbf{bold}, second-best \underline{underlined}.}
\resizebox{0.98\textwidth}{!}{%
\begin{tabular}{lccccc}
\toprule
\textbf{Algorithm} & \textbf{Small} & \textbf{Medium} & \textbf{Large} & \textbf{High-dimensional} & \textbf{All} \\
\midrule
\textbf{CBLOF} & 0.467 $\pm$ 0.312 & \underline{0.371 $\pm$ 0.227} & 0.237 $\pm$ 0.203 & \underline{0.344 $\pm$ 0.304} & 0.359 $\pm$ 0.266 \\
\textbf{DyCF} & 0.349 $\pm$ 0.287 & 0.275 $\pm$ 0.200 & 0.309 $\pm$ 0.265 & 0.049 $\pm$ 0.117 & 0.259 $\pm$ 0.247 \\
\textbf{DyCG} & 0.094 $\pm$ 0.135 & 0.043 $\pm$ 0.087 & 0.060 $\pm$ 0.108 & 0.000 $\pm$ 0.000 & 0.052 $\pm$ 0.101 \\
\textbf{ECOD} & 0.492 $\pm$ 0.297 & 0.333 $\pm$ 0.215 & 0.313 $\pm$ 0.263 & 0.256 $\pm$ 0.212 & 0.354 $\pm$ 0.256 \\
\textbf{GMM} & 0.445 $\pm$ 0.287 & 0.299 $\pm$ 0.167 & 0.303 $\pm$ 0.270 & 0.335 $\pm$ 0.268 & 0.344 $\pm$ 0.245 \\
\textbf{HBOS} & \underline{0.530 $\pm$ 0.282} & 0.328 $\pm$ 0.229 & 0.255 $\pm$ 0.295 & 0.317 $\pm$ 0.319 & 0.360 $\pm$ 0.287 \\
\textbf{IForest} & \textbf{0.537 $\pm$ 0.309} & 0.365 $\pm$ 0.252 & \underline{0.316 $\pm$ 0.357} & 0.302 $\pm$ 0.313 & \underline{0.386 $\pm$ 0.309} \\
\textbf{KDE} & 0.454 $\pm$ 0.310 & 0.327 $\pm$ 0.179 & 0.191 $\pm$ 0.282 & 0.155 $\pm$ 0.158 & 0.295 $\pm$ 0.260 \\
\textbf{KNN} & 0.507 $\pm$ 0.327 & 0.324 $\pm$ 0.159 & 0.173 $\pm$ 0.205 & 0.238 $\pm$ 0.154 & 0.319 $\pm$ 0.249 \\
\textbf{KernelCF (lin)} & 0.461 $\pm$ 0.276 & 0.306 $\pm$ 0.237 & 0.138 $\pm$ 0.272 & 0.217 $\pm$ 0.214 & 0.289 $\pm$ 0.271 \\
\textbf{KernelCF (RBF)} & 0.528 $\pm$ 0.327 & 0.337 $\pm$ 0.265 & 0.122 $\pm$ 0.237 & 0.289 $\pm$ 0.328 & 0.326 $\pm$ 0.314 \\
\textbf{LODA} & 0.456 $\pm$ 0.284 & 0.303 $\pm$ 0.231 & 0.192 $\pm$ 0.174 & 0.226 $\pm$ 0.285 & 0.301 $\pm$ 0.258 \\
\textbf{OCSVM} & 0.493 $\pm$ 0.310 & 0.352 $\pm$ 0.247 & 0.224 $\pm$ 0.300 & 0.331 $\pm$ 0.315 & 0.354 $\pm$ 0.296 \\
\textbf{PCA} & 0.517 $\pm$ 0.313 & 0.287 $\pm$ 0.235 & 0.307 $\pm$ 0.275 & 0.194 $\pm$ 0.339 & 0.333 $\pm$ 0.300 \\
\textbf{UCF} & 0.527 $\pm$ 0.316 & \textbf{0.413 $\pm$ 0.231} & \textbf{0.334 $\pm$ 0.310} & \textbf{0.361 $\pm$ 0.308} & \textbf{0.414 $\pm$ 0.288} \\
\midrule
\textbf{Average} & 0.457 $\pm$ 0.291 & 0.311 $\pm$ 0.211 & 0.232 $\pm$ 0.255 & 0.241 $\pm$ 0.242 & 0.316 $\pm$ 0.263 \\
\bottomrule
\end{tabular}
}%
\label{tab:ap_performances_summary}
\end{table}

\begin{table}[h]
\centering
\caption{AP scores per datasets anomaly ratio}
\vspace{-2em}
\caption*{\small Best result in \textbf{bold}, second-best \underline{underlined}.}
\resizebox{0.98\textwidth}{!}{%
\begin{tabular}{lcccccc}
\toprule
\textbf{Algorithm} & \textbf{$<$ 3\%} & \textbf{3\% - 8\%} & \textbf{8\% - 13\%} & \textbf{13\% - 25\%} & \textbf{$>$ 25\%} & \textbf{All} \\
\midrule
\textbf{CBLOF} & \underline{0.303 $\pm$ 0.263} & 0.318 $\pm$ 0.331 & 0.328 $\pm$ 0.151 & 0.285 $\pm$ 0.047 & 0.602 $\pm$ 0.221 & 0.359 $\pm$ 0.266 \\
\textbf{DyCF} & 0.205 $\pm$ 0.200 & 0.245 $\pm$ 0.254 & 0.215 $\pm$ 0.204 & 0.134 $\pm$ 0.150 & 0.501 $\pm$ 0.285 & 0.259 $\pm$ 0.247 \\
\textbf{DyCG} & 0.008 $\pm$ 0.015 & 0.031 $\pm$ 0.031 & 0.049 $\pm$ 0.054 & 0.050 $\pm$ 0.122 & 0.167 $\pm$ 0.182 & 0.052 $\pm$ 0.101 \\
\textbf{ECOD} & 0.264 $\pm$ 0.226 & 0.347 $\pm$ 0.335 & 0.352 $\pm$ 0.176 & 0.319 $\pm$ 0.159 & 0.549 $\pm$ 0.204 & 0.354 $\pm$ 0.256 \\
\textbf{GMM} & 0.219 $\pm$ 0.177 & 0.345 $\pm$ 0.305 & 0.355 $\pm$ 0.150 & 0.295 $\pm$ 0.069 & 0.589 $\pm$ 0.240 & 0.344 $\pm$ 0.245 \\
\textbf{HBOS} & 0.252 $\pm$ 0.274 & \underline{0.395 $\pm$ 0.389} & 0.298 $\pm$ 0.166 & 0.335 $\pm$ 0.118 & 0.559 $\pm$ 0.199 & 0.360 $\pm$ 0.287 \\
\textbf{IForest} & 0.286 $\pm$ 0.332 & 0.393 $\pm$ 0.396 & \textbf{0.371 $\pm$ 0.172} & \underline{0.335 $\pm$ 0.124} & 0.596 $\pm$ 0.222 & \underline{0.386 $\pm$ 0.309} \\
\textbf{KDE} & 0.134 $\pm$ 0.140 & 0.294 $\pm$ 0.304 & 0.306 $\pm$ 0.137 & 0.227 $\pm$ 0.088 & \underline{0.620 $\pm$ 0.233} & 0.295 $\pm$ 0.260 \\
\textbf{KNN} & 0.180 $\pm$ 0.174 & 0.303 $\pm$ 0.291 & 0.321 $\pm$ 0.120 & 0.278 $\pm$ 0.045 & 0.618 $\pm$ 0.229 & 0.319 $\pm$ 0.249 \\
\textbf{KernelCF (lin)} & 0.144 $\pm$ 0.261 & 0.248 $\pm$ 0.284 & 0.352 $\pm$ 0.150 & 0.291 $\pm$ 0.182 & 0.560 $\pm$ 0.216 & 0.289 $\pm$ 0.271 \\
\textbf{KernelCF (RBF)} & 0.166 $\pm$ 0.285 & 0.329 $\pm$ 0.390 & \underline{0.369 $\pm$ 0.176} & 0.293 $\pm$ 0.207 & 0.595 $\pm$ 0.221 & 0.326 $\pm$ 0.314 \\
\textbf{LODA} & 0.220 $\pm$ 0.262 & 0.252 $\pm$ 0.280 & 0.282 $\pm$ 0.170 & 0.276 $\pm$ 0.118 & 0.557 $\pm$ 0.227 & 0.301 $\pm$ 0.258 \\
\textbf{OCSVM} & 0.256 $\pm$ 0.288 & 0.355 $\pm$ 0.391 & 0.364 $\pm$ 0.157 & 0.265 $\pm$ 0.106 & 0.583 $\pm$ 0.220 & 0.354 $\pm$ 0.296 \\
\textbf{PCA} & 0.269 $\pm$ 0.274 & 0.368 $\pm$ 0.403 & 0.208 $\pm$ 0.205 & 0.265 $\pm$ 0.092 & 0.531 $\pm$ 0.256 & 0.333 $\pm$ 0.300 \\
\textbf{UCF} & \textbf{0.313 $\pm$ 0.252} & \textbf{0.437 $\pm$ 0.386} & 0.332 $\pm$ 0.137 & \textbf{0.364 $\pm$ 0.112} & \textbf{0.651 $\pm$ 0.235} & \textbf{0.414 $\pm$ 0.288} \\
\midrule
\textbf{Average} & 0.215 $\pm$ 0.228 & 0.311 $\pm$ 0.318 & 0.300 $\pm$ 0.155 & 0.267 $\pm$ 0.116 & 0.552 $\pm$ 0.226 & 0.316 $\pm$ 0.263 \\
\bottomrule
\end{tabular}
}%
\label{tab:ap_performances_summary_ar}
\end{table}

\begin{figure}[H]
    \centering
    \includegraphics[width=\linewidth]{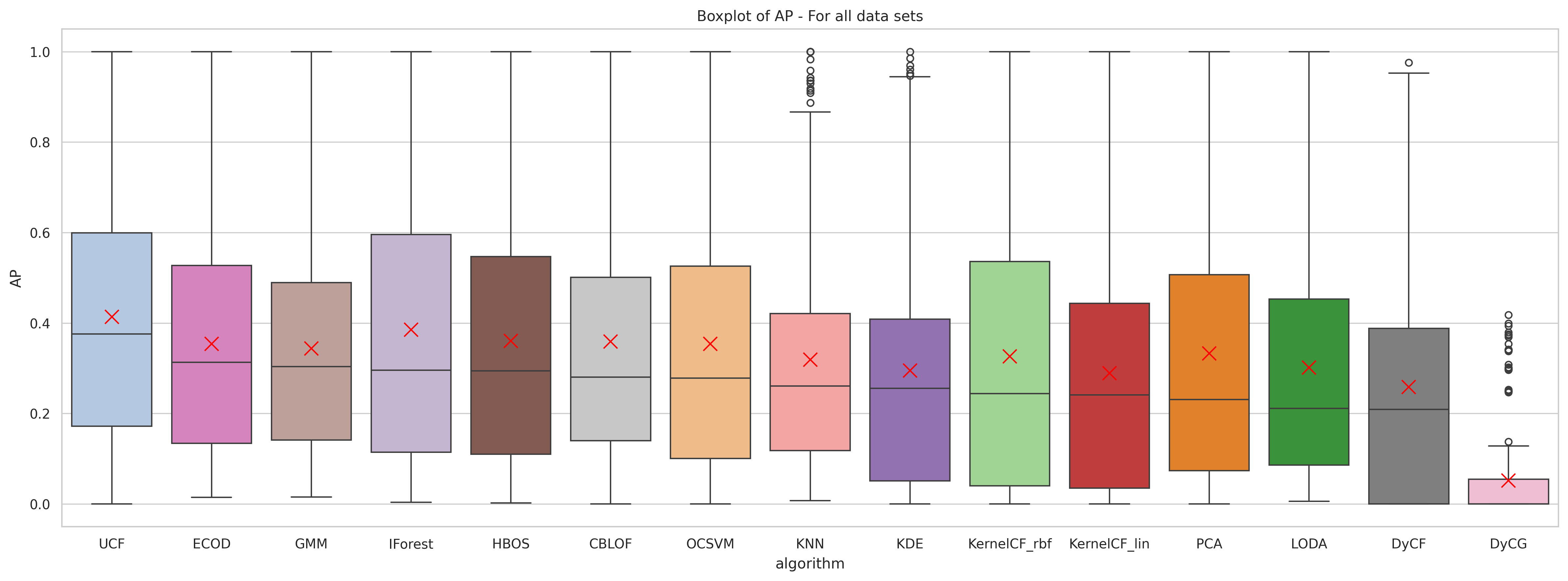}
    \caption{Boxplot of Average Precision (AP) for all 47 ADBench datasets}
    \vspace{-2em}
    \caption*{The red cross represents the mean.}
    \label{fig:boxplot_all_datasets}
\end{figure}

UCF consistently ranks among the top-performing methods across most dataset categories and anomaly ratios, outperforming Isolation Forest (IForest), the strongest baseline. Specifically, UCF achieves the highest average precision for medium, large, and high-dimensional datasets, with improvements of 10\%, 5\%, and 5\% with respect to the second-best in each category. For small datasets, it ranks fourth with performance approximately $2\%$ lower than the best method and $0.5\%$ lower than the second-best method. When categorized by anomaly ratio, UCF is the top-performing method in four out of five categories. Overall, UCF is best-performing method, demonstrating a 7\% improvement over IForest (the second-best overall) and a 13\% improvement over HBOS (the third-best overall).

Surprisingly, all algorithms, including UCF, exhibit improved performance as the anomaly ratio increases. This trend is counterintuitive, as unsupervised methods are typically sensitive to the presence of anomalies in the training set. This observation may indicate potential inconsistencies in dataset labeling, warranting further investigation.

Anomaly detection performance is highly dependent on the specific application (e.g., fraud detection, network intrusion, or failure prediction). While our results provide a general benchmark for the effectiveness of UCF, we emphasize that practitioners should validate methods on their own use cases to ensure suitability. However, the strong and consistent performance of UCF across diverse datasets emphasizes its potential as a robust anomaly detection approach.

\begin{table}[h]
\centering
\caption{Inference time (in seconds) per samples per datasets categories}
\vspace{-2em}
\caption*{\small Best result in \textbf{bold}, second-best \underline{underlined}.}
\resizebox{0.98\textwidth}{!}{%
\begin{tabular}{lccccc}
\toprule
\textbf{Algorithm} & \textbf{Small} & \textbf{Medium} & \textbf{Large} & \textbf{High-dimensional} & \textbf{All} \\
\midrule
\textbf{CBLOF} & 2.04e-05 $\pm$ 2.09e-05 & 8.36e-05 $\pm$ 3.70e-05 & 1.22e-05 $\pm$ 1.76e-05 & 4.47e-05 $\pm$ 2.33e-05 & \underline{4.24e-05 $\pm$ 3.93e-05} \\
\textbf{DyCF} & 5.75e-02 $\pm$ 5.52e-02$^*$ & 9.19e-03 $\pm$ 6.00e-03$^*$ & 2.90e-03 $\pm$ 5.78e-03$^*$ & 7.88e-02 $\pm$ 7.27e-02$^*$ & 2.58e-02 $\pm$ 4.17e-02$^*$ \\
\textbf{DyCG} & 1.65e-01 $\pm$ 2.31e-01$^*$ & 2.05e-02 $\pm$ 2.16e-02$^*$ & 1.20e-03 $\pm$ 1.29e-03$^*$ & $-$ & 6.23e-02 $\pm$ 1.48e-01$^*$ \\
\textbf{ECOD} & 1.06e-04 $\pm$ 2.09e-04 & 6.31e-05 $\pm$ 4.79e-05 & 4.55e-05 $\pm$ 5.89e-05 & 7.39e-04 $\pm$ 1.13e-03 & 1.99e-04 $\pm$ 5.51e-04 \\
\textbf{GMM} & 4.69e-06 $\pm$ 2.91e-06 & 1.93e-05 $\pm$ 3.26e-05 & 3.20e-06 $\pm$ 3.48e-06 & 5.20e-04 $\pm$ 1.16e-03 & 1.08e-04 $\pm$ 5.25e-04 \\
\textbf{HBOS} & \underline{3.98e-06 $\pm$ 2.17e-06} & \underline{4.34e-06 $\pm$ 6.90e-06} & \textbf{8.03e-07 $\pm$ 9.20e-07} & 3.39e-05 $\pm$ 4.13e-05 & \textbf{9.09e-06 $\pm$ 2.15e-05} \\
\textbf{IForest} & 5.16e-04 $\pm$ 5.07e-04 & 4.66e-05 $\pm$ 2.68e-05 & 1.42e-05 $\pm$ 7.99e-06 & 5.77e-05 $\pm$ 5.08e-05 & 1.61e-04 $\pm$ 3.26e-04 \\
\textbf{KDE} & 7.66e-05 $\pm$ 1.40e-04 & 1.11e-03 $\pm$ 7.62e-04 & 1.33e-02 $\pm$ 1.20e-02 & 6.33e-03 $\pm$ 4.98e-03$^*$ & 3.29e-03 $\pm$ 6.26e-03$^*$ \\
\textbf{KNN} & 8.37e-03 $\pm$ 1.29e-02 & 2.16e-03 $\pm$ 1.92e-03 & 3.97e-04 $\pm$ 5.81e-04 & 2.57e-03 $\pm$ 1.15e-03 & 3.41e-03 $\pm$ 7.11e-03 \\
\textbf{KernelCF (lin)} & 1.03e-02 $\pm$ 6.40e-03 & 1.68e-03 $\pm$ 7.78e-04 & 9.05e-02 $\pm$ 1.54e-01$^*$ & 1.49e-03 $\pm$ 4.79e-04$^*$ & 1.16e-02 $\pm$ 4.38e-02$^*$ \\
\textbf{KernelCF (RBF)} & 1.38e-02 $\pm$ 4.38e-03 & 3.16e-03 $\pm$ 1.58e-03 & 1.93e-03 $\pm$ 3.94e-04$^*$ & 3.31e-03 $\pm$ 7.98e-04$^*$ & 6.66e-03 $\pm$ 5.77e-03$^*$ \\
\textbf{LODA} & 1.96e-05 $\pm$ 8.67e-06 & 3.07e-05 $\pm$ 3.21e-05 & 7.22e-05 $\pm$ 9.43e-05 & 4.17e-03 $\pm$ 1.16e-02 & 8.31e-04 $\pm$ 5.13e-03 \\
\textbf{OCSVM} & 1.06e-05 $\pm$ 5.31e-06 & 2.50e-04 $\pm$ 1.37e-04 & 1.69e-03 $\pm$ 1.22e-03 & 3.00e-03 $\pm$ 5.72e-03$^*$ & 9.08e-04 $\pm$ 2.73e-03$^*$ \\
\textbf{PCA} & \textbf{3.88e-06 $\pm$ 2.16e-06} & \textbf{1.56e-06 $\pm$ 3.16e-06} & \underline{1.04e-06 $\pm$ 1.61e-06}$^*$ & 1.71e-03 $\pm$ 3.25e-03$^*$ & 1.77e-04 $\pm$ 1.05e-03$^*$ \\
\textbf{UCF} & 2.10e-02 $\pm$ 1.11e-02 & 3.81e-03 $\pm$ 1.73e-03 & 1.22e-02 $\pm$ 8.61e-03 & 7.75e-03 $\pm$ 8.77e-03 & 1.09e-02 $\pm$ 1.03e-02 \\
\midrule
\textbf{Average} & 1.85e-02 $\pm$ 2.15e-02 & 2.81e-03 $\pm$ 2.31e-03 & 8.28e-03 $\pm$ 1.23e-02 & $-$ & 8.43e-03 $\pm$ 1.82e-02 \\
\bottomrule
\end{tabular}
}%
\vspace{-0.5em}
\caption*{$^*$ \footnotesize{Did not work on all datasets of this category. The mean may not be representative.}}
\label{tab:it_performances_summary}
\end{table}

Furthermore, despite the computational challenges posed by UCF on large datasets, \cref{tab:it_performances_summary} demonstrates that, with appropriate code optimizations, these difficulties can be mitigated, and the outlier score for a sample can be obtained almost instantaneously, regardless of the size of the training dataset. Additionally, unlike most other methods, UCF does not require any training or parameter tuning. This unique characteristic allows for the direct computation of an outlier score for any given sample without prior preparation.

\section{Conclusion}
\label{sec:conclusion}
In this paper, we addressed the dimensionality limitations of Christoffel function (CF) based anomaly detection by introducing the Univariate Christoffel Function (UCF), a formulation that maps each query point from $\mathbb{R}^d$ to $\mathbb{R}$ via squared distances to support points. This induces a pushforward measure enabling both anomaly detection and support density estimation, while preserving the original CF key theoretical properties: the on-off support dichotomy and accurate support shape capture.

Extensive experiments on the ADBench benchmark demonstrate that UCF consistently outperforms 14 state-of-the-art baselines in Average Precision, ranking first overall, with a 4\% improvement over Isolation Forest and over 10\% over HBOS. Its performance is particularly strong for medium- and high-dimensional datasets, where traditional CF-based methods are computationally intractable. From the computational point of view, it is noticeable that the UCF approach does not require neither training nor parameter tuning. Code optimization allows for an efficient inference computation.

By eliminating the exponential matrix inversion bottleneck, UCF expands the practical applicability of CF based methods to high-dimensional datasets while retaining their mathematical rigor and computational frugality. This work thus equips the community with a robust, explainable, and universally applicable anomaly detection method, particularly valuable in domains where interpretability and efficiency are critical. Future work may explore leveraging the specific theoretical properties of the UCF to enhance interpretability. The possibility to estimate the density also opens perspectives to confirm anomalies. Domain-specific adaptations may also be considered, but UCF’s consistent performance across diverse datasets already establishes it as a compelling addition to the anomaly detection toolkit.

\section*{Acknowledgments}
Florian Grivet acknowledges financial support from the Centre national d’études spatiales (CNES), France (ROR: \url{https://ror.org/04h1h0y33}).

This work has benefited from the AI Interdisciplinary Institute ANITI funded by the France 2030 program under the Grant agreements n°ANR-19-P3IA-0004 and n°ANR-23-IACL-0002.

During the preparation of this work, the authors used LLM in order to check the text grammar and style. After using this tool, the authors reviewed and edited the content as needed and hence take full responsibility for the content of the submitted article.

\appendix
\section{Dataset properties}
\label{app:dataset_properties}

The following table summarizes each dataset, including the number of samples (\# Samples), features (\# Features), anomalies (\# Anomaly), anomaly ratio (\% Anomaly), and its two categories: one based on sample size/number of features (Category), and the other based on its anomaly ratio (Anomaly Ratio), as described in \cref{sec:datasets_algorithms}.

\begin{table}[httb]
\centering
\caption{Properties of the 47 datasets in ADBench}
\resizebox{0.98\textwidth}{!}{%
\begin{tabular}{lcccc|cc}
\toprule
\textbf{Data set} & \textbf{\# Samples} & \textbf{\# Features} & \textbf{\# Anomaly} & \textbf{\% Anomaly} & \textbf{Category} & \textbf{Anomaly Ratio} \\
\midrule
\textbf{aloi} & 49534 & 27 & 1508 & 3.04 & Large & 3\% - 8\% \\
\textbf{annthyroid} & 7200 & 6 & 534 & 7.42 & Medium & 3\% - 8\% \\
\textbf{backdoor} & 95329 & 196 & 2329 & 2.44 & High-dimensional & $<$ 3\% \\
\textbf{breastw} & 683 & 9 & 239 & 34.99 & Small & $>$ 25\% \\
\textbf{campaign} & 41188 & 62 & 4640 & 11.27 & High-dimensional & 8\% - 13\% \\
\textbf{cardio} & 1831 & 21 & 176 & 9.61 & Medium & 8\% - 13\% \\
\textbf{cardiotocography} & 2114 & 21 & 466 & 22.04 & Medium & 13\% - 25\% \\
\textbf{celeba} & 202599 & 39 & 4547 & 2.24 & Large & $<$ 3\% \\
\textbf{census} & 299285 & \underline{500} & 18568 & 6.2 & High-dimensional & 3\% - 8\% \\
\textbf{cover} & 286048 & 10 & 2747 & 0.96 & Large & $<$ 3\% \\
\textbf{donors} & \textbf{619326} & 10 & \underline{36710} & 5.93 & Large & 3\% - 8\% \\
\textbf{fault} & 1941 & 27 & 673 & 34.67 & Medium & $>$ 25\% \\
\textbf{fraud} & 284807 & 29 & 492 & 0.17 & Large & $<$ 3\% \\
\textbf{glass} & 214 & 7 & 9 & 4.21 & Small & 3\% - 8\% \\
\textbf{hepatitis} & 80 & 19 & 13 & 16.25 & Small & 13\% - 25\% \\
\textbf{http} & \underline{567498} & 3 & 2211 & 0.39 & Large & $<$ 3\% \\
\textbf{internetads} & 1966 & \textbf{1555} & 368 & 18.72 & High-dimensional & 13\% - 25\% \\
\textbf{ionosphere} & 351 & 32 & 126 & \underline{35.9} & Small & $>$ 25\% \\
\textbf{landsat} & 6435 & 36 & 1333 & 20.71 & Medium & 13\% - 25\% \\
\textbf{letter} & 1600 & 32 & 100 & 6.25 & Medium & 3\% - 8\% \\
\textbf{lymphography} & 148 & 18 & 6 & 4.05 & Small & 3\% - 8\% \\
\textbf{magic.gamma} & 19020 & 10 & 6688 & 35.16 & Large & $>$ 25\% \\
\textbf{mammography} & 11183 & 6 & 260 & 2.32 & Large & $<$ 3\% \\
\textbf{mnist} & 7603 & 100 & 700 & 9.21 & High-dimensional & 8\% - 13\% \\
\textbf{musk} & 3062 & 166 & 97 & 3.17 & High-dimensional & 3\% - 8\% \\
\textbf{optdigits} & 5216 & 64 & 150 & 2.88 & High-dimensional & $<$ 3\% \\
\textbf{pageblocks} & 5393 & 10 & 510 & 9.46 & Medium & 8\% - 13\% \\
\textbf{pendigits} & 6870 & 16 & 156 & 2.27 & Medium & $<$ 3\% \\
\textbf{pima} & 768 & 8 & 268 & 34.9 & Small & $>$ 25\% \\
\textbf{satellite} & 6435 & 36 & 2036 & 31.64 & Medium & $>$ 25\% \\
\textbf{satimage-2} & 5803 & 36 & 71 & 1.22 & Medium & $<$ 3\% \\
\textbf{shuttle} & 49097 & 9 & 3511 & 7.15 & Large & 3\% - 8\% \\
\textbf{skin} & 245057 & 3 & \textbf{50859} & 20.75 & Large & 13\% - 25\% \\
\textbf{smtp} & 95156 & 3 & 30 & 0.03 & Large & $<$ 3\% \\
\textbf{spambase} & 4207 & 57 & 1679 & \textbf{39.91} & High-dimensional & $>$ 25\% \\
\textbf{speech} & 3686 & 400 & 61 & 1.65 & High-dimensional & $<$ 3\% \\
\textbf{stamps} & 340 & 9 & 31 & 9.12 & Small & 8\% - 13\% \\
\textbf{thyroid} & 3772 & 6 & 93 & 2.47 & Medium & $<$ 3\% \\
\textbf{vertebral} & 240 & 6 & 30 & 12.5 & Small & 8\% - 13\% \\
\textbf{vowels} & 1456 & 12 & 50 & 3.43 & Medium & 3\% - 8\% \\
\textbf{waveform} & 3443 & 21 & 100 & 2.9 & Medium & $<$ 3\% \\
\textbf{wbc} & 223 & 9 & 10 & 4.48 & Small & 3\% - 8\% \\
\textbf{wdbc} & 367 & 30 & 10 & 2.72 & Small & $<$ 3\% \\
\textbf{wilt} & 4819 & 5 & 257 & 5.33 & Medium & 3\% - 8\% \\
\textbf{wine} & 129 & 13 & 10 & 7.75 & Small & 3\% - 8\% \\
\textbf{wpbc} & 198 & 33 & 47 & 23.74 & Small & 13\% - 25\% \\
\textbf{yeast} & 1484 & 8 & 507 & 34.16 & Medium & $>$ 25\% \\
\bottomrule
\end{tabular}
}%
\label{tab:datasets_properties}
\end{table}

\newpage
\bibliographystyle{siamplain}
\bibliography{references}

\begin{thebibliography}{10}

\bibitem{askari2018}
{\sc A.~Askari, F.~Yang, and L.~E. Ghaoui}, {\em Kernel-based outlier detection
  using the inverse christoffel function}, 2018,
  \url{https://arxiv.org/abs/1806.06775},
  \url{https://arxiv.org/abs/1806.06775}.

\bibitem{Baran}
{\sc M.~Baran}, {\em Complex equilibrium measure and bernstein type theorems
  for compact sets in $\mathbb{R}^n$}, Proc. Amer. Math. Soc., 123 (1995),
  pp.~485--494.

\bibitem{bedford}
{\sc E.~Bedford and B.~A. Taylor}, {\em The complex equilibrium measure of a
  symmetric convex set in $\mathbb{R}^n$}, Trans. Amer. Math. Soc., 294 (1986),
  pp.~705--717.

\bibitem{billet2026}
{\sc L.~Billet, L.~Travé-Massuyès, E.~Chanthery, and A.~Gaffet}, {\em Cloe:
  Christoffel loss autoencoder for anomaly detection}, in Submitted to The 16th
  International Conference on Information Science and Technology, 2026.

\bibitem{Chazal}
{\sc F.~Chazal, D.~Cohen-{S}teiner, and Q.~M\'erigot}, {\em Geometric inference
  for probability measures}, Found. Comp. Math., 11 (2011), pp.~733--751.

\bibitem{Ducharlet2025}
{\sc K.~Ducharlet, L.~Trav{\'e}-Massuy{\`e}s, J.-B. Lasserre, M.-V. Le~Lann,
  and Y.~Miloudi}, {\em Leveraging the christoffel function for outlier
  detection in data streams}, International Journal of Data Science and
  Analytics, 20 (2025), pp.~2021--2037,
  \url{https://doi.org/10.1007/s41060-024-00581-2},
  \url{https://doi.org/10.1007/s41060-024-00581-2}.

\bibitem{hbos}
{\sc M.~Goldstein and A.~Dengel}, {\em Histogram-based outlier score (hbos): A
  fast unsupervised anomaly detection algorithm}, KI-2012: poster and demo
  track, 1 (2012), pp.~59--63.

\bibitem{adbench}
{\sc S.~Han, X.~Hu, H.~Huang, M.~Jiang, and Y.~Zhao}, {\em Adbench: Anomaly
  detection benchmark}, in Neural Information Processing Systems (NeurIPS),
  2022.

\bibitem{kmeans}
{\sc D.~M. Hawkins}, {\em Identification of Outliers}, Springer Netherlands,
  Dordrecht, 1980, \url{https://doi.org/10.1007/978-94-015-3994-4},
  \url{http://link.springer.com/10.1007/978-94-015-3994-4}.

\bibitem{cblof}
{\sc Z.~He, X.~Xu, and S.~Deng}, {\em Discovering cluster-based local
  outliers}, Pattern Recognition Letters, 24 (2003), pp.~1641--1650,
  \url{https://doi.org/https://doi.org/10.1016/S0167-8655(03)00003-5},
  \url{https://www.sciencedirect.com/science/article/pii/S0167865503000035}.

\bibitem{klimek}
{\sc M.~Klimek}, {\em Pluripotentiual {T}heory}, Clarendon Press, UK, 1992.

\bibitem{kroo2013christoffel}
{\sc A.~Kro{\'o} and D.~Lubinsky}, {\em Christoffel functions and universality
  in the bulk for multivariate orthogonal polynomials}, Canadian Journal of
  Mathematics, 65 (2013), pp.~600--620.

\bibitem{Lass-Slot}
{\sc J.~Lasserre and L.~Slot}, {\em A {C}hristoffel-like function for
  high-dimensional support inference in graphical models}, J. Approx. Theory,
  (2026).
\newblock article number 106309.

\bibitem{sos}
{\sc J.-B. Lasserre and E.~Pauwels}, {\em Sorting out typicality with the
  inverse moment matrix sos polynomial}, in Proceedings of the 30th
  International Conference on Neural Information Processing Systems, NIPS'16,
  Red Hook, NY, USA, 2016, Curran Associates Inc., p.~190–198.

\bibitem{Lasserre2019}
{\sc J.~B. Lasserre and E.~Pauwels}, {\em The empirical {C}hristoffel function
  with applications in data analysis}, Adv. Comput. Math., 45 (2019),
  pp.~1439--1468, \url{https://doi.org/10.1007/s10444-019-09673-1},
  \url{https://doi.org/10.1007/s10444-019-09673-1}.

\bibitem{Lasserreetal2022}
{\sc J.~B. Lasserre, E.~Pauwels, and M.~Putinar}, {\em The
  Christoffel–Darboux Kernel for Data Analysis}, Cambridge University Press,
  2022.

\bibitem{liscalable}
{\sc Z.~Li, Q.~Huang, Y.~Zhu, L.~Yang, M.~M. Amiri, N.~van Stein, and M.~van
  Leeuwen}, {\em Scalable, explainable and provably robust anomaly detection
  with one-step flow matching}, in The Thirty-ninth Annual Conference on Neural
  Information Processing Systems (NeurIPS 2025), 2025.

\bibitem{ecod}
{\sc Z.~Li, Y.~Zhao, X.~Hu, N.~Botta, C.~Ionescu, and G.~H. Chen}, {\em Ecod:
  Unsupervised outlier detection using empirical cumulative distribution
  functions}, IEEE Transactions on Knowledge and Data Engineering, 35 (2023),
  pp.~12181--12193, \url{https://doi.org/10.1109/TKDE.2022.3159580}.

\bibitem{iforest}
{\sc F.~T. Liu, K.~M. Ting, and Z.-H. Zhou}, {\em Isolation forest}, in 2008
  Eighth IEEE International Conference on Data Mining, 2008, pp.~413--422,
  \url{https://doi.org/10.1109/ICDM.2008.17}.

\bibitem{high-dim-stat}
{\sc A.~Maleki, S.~Sen, S.~Balakrishnan, V.~Zuber, C.~Gao, R.~Dudeja,
  C.~Thrampoulidis, A.~Zang, W.~Su, J.~Kluzowski, P.-L. Lo, and A.~Shojaie},
  {\em High-{D}imensional {S}tatistics: {R}eflections on {P}rogress and {O}pen
  {P}roblems}, tech. report, 2026.
\newblock arXiv:2605.05076.

\bibitem{mate-nevai-totik}
{\sc A.~Mat\'{e}, P.~Nevai, and V.~Totik}, {\em Szeg\"{o}'s extremum problem on
  the unit circle}, Ann. Math., 134 (1991), pp.~433--453.

\bibitem{gmm}
{\sc G.~W. Milligan}, {\em An algorithm for generating artificial test
  clusters}, Psychometrika, 50 (1985), pp.~123--127,
  \url{https://doi.org/10.1007/BF02294153},
  \url{https://doi.org/10.1007/BF02294153}.

\bibitem{gini}
{\sc C.~Mussard, A.~Charpentier, and S.~Mussard}, {\em {KNN} and k-means in
  gini prametric spaces}, in {ECAI} 2025 - 28th European Conference on
  Artificial Intelligence, 25-30 October 2025, Bologna, Italy - Including 14th
  Conference on Prestigious Applications of Intelligent Systems {(PAIS} 2025),
  I.~Lynce, N.~Murano, M.~Vallati, S.~Villata, F.~Chesani, M.~Milano,
  A.~Omicini, and M.~Dastani, eds., Frontiers in Artificial Intelligence and
  Applications, {IOS} Press, 2025, pp.~2394--2401,
  \url{https://doi.org/10.3233/FAIA251085},
  \url{https://doi.org/10.3233/FAIA251085}.

\bibitem{kde}
{\sc E.~Parzen}, {\em {On Estimation of a Probability Density Function and
  Mode}}, The Annals of Mathematical Statistics, 33 (1962), pp.~1065 -- 1076,
  \url{https://doi.org/10.1214/aoms/1177704472},
  \url{https://doi.org/10.1214/aoms/1177704472}.

\bibitem{loda}
{\sc T.~Pevn{\'y}}, {\em Loda: Lightweight on-line detector of anomalies},
  Machine Learning, 102 (2016), pp.~275--304,
  \url{https://doi.org/10.1007/s10994-015-5521-0},
  \url{https://doi.org/10.1007/s10994-015-5521-0}.

\bibitem{knn}
{\sc S.~Ramaswamy, R.~Rastogi, and K.~Shim}, {\em Efficient algorithms for
  mining outliers from large data sets}, in Proceedings of the 2000 ACM SIGMOD
  International Conference on Management of Data, SIGMOD '00, New York, NY,
  USA, 2000, Association for Computing Machinery, p.~427–438,
  \url{https://doi.org/10.1145/342009.335437},
  \url{https://doi.org/10.1145/342009.335437}.

\bibitem{ocsvm}
{\sc B.~Sch\"{o}lkopf, R.~C. Williamson, A.~Smola, J.~Shawe-Taylor, and
  J.~Platt}, {\em Support vector method for novelty detection}, in Advances in
  Neural Information Processing Systems, S.~Solla, T.~Leen, and K.~M\"{u}ller,
  eds., vol.~12, MIT Press, 1999,
  \url{https://proceedings.neurips.cc/paper_files/paper/1999/file/8725fb777f25776ffa9076e44fcfd776-Paper.pdf}.

\bibitem{pca}
{\sc M.-L. Shyu, S.-C. Chen, K.~Sarinnapakorn, and L.~Chang}, {\em A novel
  anomaly detection scheme based on principal component classifier}, Scientific
  and technical aerospace reports, 45 (2007).

\bibitem{vu2020rateconvergencegeometricinference}
{\sc {Vu, Mai Trang}, {Bachoc, François}, and {Pauwels, Edouard}}, {\em Rate
  of convergence for geometric inference based on the empirical christoffel
  function}, ESAIM: PS, 26 (2022), pp.~171--207,
  \url{https://doi.org/10.1051/ps/2022003}.

\end{thebibliography}

\end{document}